\definecolor{lightblue}{rgb}{0.68, 0.85, 0.90}
\definecolor{lightgray}{rgb}{0.83, 0.83, 0.83}
\theoremstyle{plain}
\newtheorem{theorem}{Theorem}[section]
\newtheorem{proposition}[theorem]{Proposition}
\newtheorem{lemma}[theorem]{Lemma}
\newtheorem{corollary}[theorem]{Corollary}
\theoremstyle{definition}
\newtheorem{definition}[theorem]{Definition}
\newtheorem{assumption}[theorem]{Assumption}
\theoremstyle{remark}
\newtheorem{remark}[theorem]{Remark}
\icmltitlerunning{Lego Sketch: A Scalable Memory-augmented Neural Network for Sketching Data Streams}
\begin{document}

\twocolumn[
\icmltitle{Lego Sketch: A Scalable Memory-augmented Neural Network \\ for Sketching Data Streams}

\icmlsetsymbol{equal}{*}


\begin{icmlauthorlist}
\icmlauthor{Yuan Feng}{equal,cs,ddl}
\icmlauthor{Yukun Cao}{equal,cs,ddl}
\icmlauthor{Hairu Wang}{cs,ddl}
\icmlauthor{Xike Xie}{bio,ddl}
\icmlauthor{S. Kevin Zhou}{bio,ddl}
\end{icmlauthorlist}

\icmlaffiliation{cs}{School of Computer Science, University of Science and Technology of China (USTC), China}
\icmlaffiliation{bio}{School of Biomedical Engineering, USTC, China}
\icmlaffiliation{ddl}{Data Darkness Lab, MIRACLE Center, Suzhou Institute for Advanced Research, USTC, China}

\icmlcorrespondingauthor{Xike Xie}{xkxie@ustc.edu.cn}

\icmlkeywords{Machine Learning, ICML}

\vskip 0.3in
]



\printAffiliationsAndNotice{\icmlEqualContribution} 

\begin{abstract}
Sketches, probabilistic structures for estimating item frequencies in infinite data streams with limited space, are widely used across various domains. Recent studies have shifted the focus from handcrafted sketches to neural sketches, leveraging memory-augmented neural networks (MANNs) to enhance the streaming compression capabilities and achieve better space-accuracy trade-offs.
However, existing neural sketches struggle to scale across different data domains and space budgets due to inflexible MANN configurations. In this paper, we introduce a scalable MANN architecture that brings to life the {\it Lego sketch}, a novel sketch with superior scalability and accuracy.
Much like assembling creations with modular Lego bricks, the Lego sketch dynamically coordinates multiple memory bricks to adapt to various space budgets and diverse data domains. Our theoretical analysis guarantees its high scalability and provides the first error bound for neural sketch. Furthermore, extensive experimental evaluations demonstrate that the Lego sketch exhibits superior space-accuracy trade-offs, outperforming existing handcrafted and neural sketches. Our code is available at \href{https://github.com/FFY0/LegoSketch_ICML}{https://github.com/FFY0/LegoSketch\_ICML}.

\end{abstract} 
\section{Introduction}
The estimation of item frequency in a continuous and never-ending data stream stands as a pivotal task in supporting a broad spectrum of applications in machine learning~\cite{skecth_nlp,feature_selection,semi_supervised}, network measurements~\cite{yu2013software,ES}, and big data analytics~\cite{Sy,zaharia2016apache}.
Sketches, a typical probabilistic structure, have become essential for representing data streams with  sub-linear space and linear time while maintaining accurate item frequency estimates.
Two major research direction in sketching techniques have emerged:
{\it handcrafted sketches}~\cite{CM,C,CU,CMM} and {\it neural sketches}~\cite{rae2019meta,feng2024mayfly, Cao_Feng_Xie_2023,cao2024learning}. 

\begin{figure}[t]
	\centering
	\begin{minipage}{0.49\linewidth}  
		\centering
		\begin{tikzpicture}
			\draw[fill=white] (0,0.3) ellipse (2.05cm and 1.19cm);
			\node[align=center] at (0,1.1) {\footnotesize \textbf{Derivatives}};
			\node[align=center] at (0,0.86) {\tiny \yrcite{Cold,A,ES,lsketch,aamand2024improved}, \ldots};
			
			\draw[fill=white] (0,0) ellipse (1.95cm and 0.75cm);
			\node[align=center] at (0,0.5) {\footnotesize \textbf{Core Structure}};
			\node[align=center] at (0.1,0.2) {\scriptsize \textbf{Handcrafted vs. Neural Sketches}};
			\node[align=left] at (0,-0.1) {\tiny CM-sketch \yrcite{CM} ~Meta-sketch \yrcite{Cao_Feng_Xie_2023,cao2024learning}};
			\node[align=left] at (-0.12,-0.35) {\tiny ~C-sketch \yrcite{C} ~\underline{{\it Lego sketch (2025)}}};
		\end{tikzpicture}
		\vspace{-0.3cm}  
		\caption{\small Sketch Literatures}
		\label{fig:sketches}
	\end{minipage}
	\vspace{-0.3cm}  
\hfill
	\begin{minipage}{0.49\linewidth}  
		\centering
		\includegraphics[width=0.95\linewidth]{./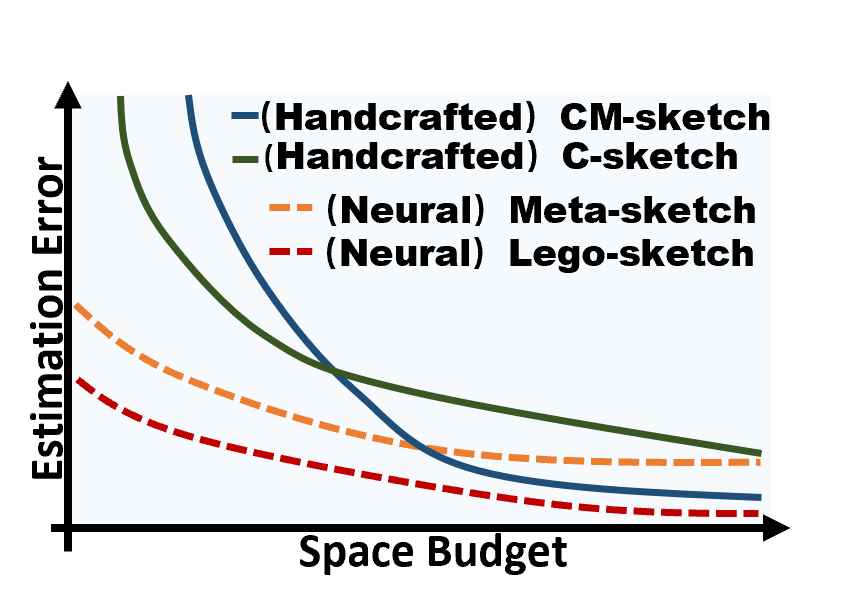}
		\vspace{-0.3cm}  
		\caption{\small Space-accuracy Trade-off (Aol Dataset)}
		\label{fig:accross_space}
	\end{minipage}
\end{figure}

Handcrafted sketches rely on core structures, such as the CM-sketch~\cite{CM} and C-sketch~\cite{C}, which are comprised of compact 2D arrays, hash functions, and predefined strategies. These handcrafted core structures have since formed the foundation for numerous variants or derivatives~\cite{CU, CMM}, designed to better adapt to the skewed distributions in data streams\textsuperscript{\ref{ft:alpha}}.
In contrast to the decade-old core structures of handcrafted sketches, recent advancements in neural sketches~\cite{Cao_Feng_Xie_2023,cao2024learning,feng2024mayfly,rae2019meta} have introduced memory-augmented neural networks (MANNs) as a new class of core structures.
The MANN-based cores improve sketching performance by leveraging their capability of memory compression and adaptability to distributional patterns of data streams. Figure~\ref{fig:sketches} provides an overview of the literatures on sketches.

	Despite recent advancements, existing neural sketches face practical challenges when deployed in real-world settings, particularly in scaling effectively to data streams across diverse domains and varying space budgets.
	A major scalability issue is that they require retraining when there is a shift in data domains or changes in space budgets. This requirement stems from their reliance on a sample MLP-based embedding module and a fixed-size memory module within the traditional MANN architecture. Also, as demonstrated in Figure~\ref{fig:accross_space}, the accuracy advantage of existing neural sketches, i.e., meta-sketches, in terms of estimation error over handcrafted sketches, tends to decrease as the space budget increases.

\begin{figure}[t]
	\includegraphics[width=1\linewidth]{./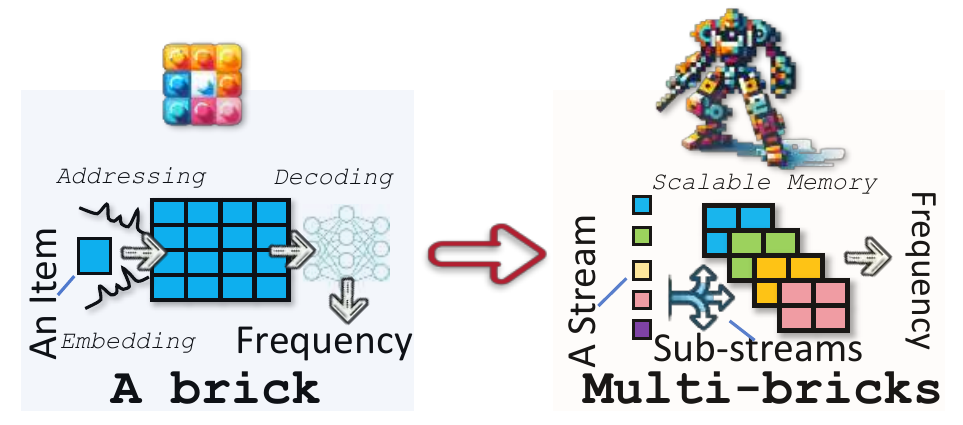}
	\caption{Lego Sketch Overview: {\small The Lego sketch enables a scalable and unified framework capable of adapting to different domains and space budgets, mirroring the modular design seen in dedicate creations built from multiple Lego bricks.}}
	\vspace{-0.5cm}
	\label{fig:lego_intro}
\end{figure}

In this paper, we propose a novel neural sketch, called the {\it Lego sketch}, designed to significantly enhance scalability and improve the space-accuracy trade-offs of existing neural sketches, thereby setting new benchmarks in performance\footnote{
This paper focuses on the core structure of sketches. Derivatives of a core structure typically incorporates external enhancements, such as filters (discussed in Section~\ref{sec:relate}). Section~\ref{sec:es} explores derivatives of the Lego sketch, which use it as the core structure, providing a comprehensive analysis.}
As depicted in ~\ref{fig:lego_intro}, the Lego sketch is initially trained on a single memory brick to acquire basic sketching capabilities across data domains.
When dealing with a large-scale data stream, it divides the stream into manageable sub-streams, each controlled by an independent memory brick.
The Lego framework is achieved by devising a new MANN architecture from two aspects.

	{\bf For  scalability}, the Lego sketch overcomes the shortcomings of existing neural sketches under conventional MANN architecture, particularly when used across different data domains and space budgets.
	First, we propose a novel {\it normalized multi-hash embedding }technique to enable theoretically provable domain-agnostic scalability (Section \ref{sec:domain_sc}), without necessitating retraining for different data domains.
	Second, with theoretical foundations (Section \ref{sec:memory_sc}), our approach leverages a {\it scalable memory} strategy to coordinate multiple memory bricks, avoiding the need for retraining when adapting memory sizes to fit different space budgets.

		{\bf For accuracy}, the Lego sketch not only addresses the issue of diminished advantage of existing neural sketches in high budgets but also offers enhancement in accuracy.
	Specifically, we propose a novel {\it self-guided weighting loss} (Section \ref{sec:training}) for dynamically weighting different meta-tasks during self-supervised meta-learning, effectively addressing issues of diminished advantage. Moreover, a pioneering module termed {\it memory scanning} (Section \ref{sec:modu}), which autonomously reconstructs stream features from the compressed memory, along with other optimization techniques, significantly aids in the estimation, thereby achieving superior space-accuracy trade-offs, as shown in Figure \ref{fig:accross_space}.

	\textbf{Contributions.} {\bf 1)} We introduce the Lego sketch, a novel neural sketch equipped with a scalable MANN architecture designed for sketching data streams.
	{\bf 2)} The Lego sketch resolves scalability limitations and achieves a superior space-accuracy trade-off through key technical innovations, including {\it normalized multi-hash embedding}, {\it scalable memory},  {\it memory scanning}, and {\it self-guided weighting loss}. {\bf 3)} We also provide theoretical support for the scalability and estimation error analysis, areas previously unexplored in existing neural sketches.
{\bf 4)} Extensive empirical studies on both real-world and synthetic datasets confirm that the Lego sketch significantly outperforms state-of-the-art methods.

\section{Related Works}

\label{sec:relate}
\textbf{Sketch} techniques fall into two categories: core sketches and their derivatives, as shown in Figure \ref{fig:sketches}.
Core sketches \cite{CM, C, CU, CMM,liu2021xy, Cao_Feng_Xie_2023, cao2024learning}, also referred to as core structures, provide the fundamental structure for stream processing, where \cite{CU, CMM} are variants of \cite{CM, C} under specific conditions.
Subsequent derivatives~\cite{Cold,A,ES,aamand2024improved,lsketch,zhao2023panakos,ma2024scout,gao2024scout,liu2023probabilistic,liu2024disco,gao2024tailoredsketch,huang2023memory} of core structures, depend on the core structure for their foundational performance and incorporate external enhancement with orthogonal add-ons like filters.
Notably, recent advances in neural sketches, i.e., the meta-sketch~\cite{Cao_Feng_Xie_2023, cao2024learning}, have shown improved accuracy under tight space budgets but face scalability challenges. Our Lego sketch tackles the scalability challenges and optimizes the space-accuracy trade-offs.
Moreover, we study how subsequent derivatives can utilize the Lego sketch as their core, detailed in Section \ref{sec:es}. 

\textbf{Counter-based summarization} is also a common method for summarizing data streams, identifying frequent items through techniques like MG summary~\cite{misra1982finding} and SpaceSaving~\cite{metwally2005efficient}. These methods typically excel in insertion-only stream models, with some research also exploring learning-based enhancements~\cite{shahout2024learning}. However, compared to sketch techniques, counter-based methods struggle with dynamic streams that allow deletions or negative weights, while sketch methods naively handle unbounded deletions, making them suitable for broader applications~\cite{cormode2008finding, berinde2010space}. Given these differences, counter-based summarization and sketch techniques should not be directly compared; nonetheless, future research could explore the development of an end-to-end neural framework for counter-based summarization.

\begin{table}
	\centering
	\caption{MANN Architecture in Neural Sketches}
	\scriptsize
	\label{tab:com}
	\vspace{-0.3cm}
	\begin{tabular}{@{\hspace{0.15cm}}l@{\hspace{0.15cm}}|l@{\hspace{0.15cm}}l@{\hspace{0.15cm}}l@{\hspace{0.15cm}}l}
		\toprule
		\textbf{MANN Arch.} & Embedding          & Memory          & Scanning & Loss \\ 
		\midrule
		Meta-Sketch & MLP-based & Fixed  & - & Regular Loss $\mathcal{L}_{o}$ \\
		Lego Sketch & Scalable  & Scalable & Deepsets-based & Self-guided Loss $\mathcal{L}'$ \\
		\bottomrule
	\end{tabular}
	\vspace{-0.3cm}
\end{table}

\textbf{Memory-augmented neural networks (MANNs)}~\cite{graves2016hybrid,danihelka2016associative,weston2014memory,graves2014neural} are designed for efficient interaction with external memory through neural networks.
Leveraging meta-learning, MANNs facilitate one-shot learning capability of certain tasks across various datasets, avoiding be limited to specific dataset~\cite{santoro2016meta,vinyals2016matching,hospedales2021meta}. 
Previous efforts on neural sketches~\cite{Cao_Feng_Xie_2023,cao2024learning} implement this MANN architecture in four functional modules: {\it Embedding}, {\it Addressing}, {\it Memory}, and {\it Decoding}, where the one-shot learning paradigm is inherited for the single-pass storage procedure in stream processing.
As aforementioned, neural sketches struggle with the scalability challenge.
Our Lego sketch addresses the challenge by devising a novel MANN architecture. A comparison of our architecture with traditional MANNs is outlined in Table~\ref{tab:com}.

\section{Methodology}

We consider a standard data stream model to outline the problem of stream item frequency estimation. Consider a data stream $\mathcal{X} = (x_1,...,x_N)$ consisting of $N$ data items with $n$ distinct elements. Each data item $x_i \in \mathcal{X}$ assumes a value from the item domain set $\mathbb{X} = \{e_1,...,e_n\}$, where elements in $\mathbb{X}$ are unique.
The frequency $f_i$ of element $e_i$ indicates its occurrence in $\mathcal{X}$. Thus, the total frequencies for all $f_i$s add up to $N$, representing the length of the stream. Without causing any ambiguity, we use $x_i$ to refer to either a data item for storing or an element for querying.

\subsection{Lego Framework}
\label{sec:modu}

{\bf Overview.} The Lego sketch consists of five modules: \textit{Scalable} \textit{Embedding}$(\mathcal{E})$, \textit{Hash Addressing}$ (\mathcal{A})$, \textit{Scalable Memory} $(\mathcal{M})$, \textit{Memory} \textit{Scanning} $(\mathcal{S})$, and \textit{Ensemble Decoding}$(\mathcal{D})$. These modules collaborate to provide two types of operations: \textit{Store} and \textit{Query}, controlling the writing and reading of external memory. Each operation is carried out through a single forward pass of several network modules, ensuring that the computational complexity of a single operation remains constant, independent of the data stream’s length. This aligns well with the efficiency requirements for sketching data streams~\cite{C,CM,Cao_Feng_Xie_2023}. A high-level overview of these operations is presented in Figure \ref{fig:fram}.

\begin{figure*}[t]
	\centering
	\includegraphics[width=1.0\linewidth]{./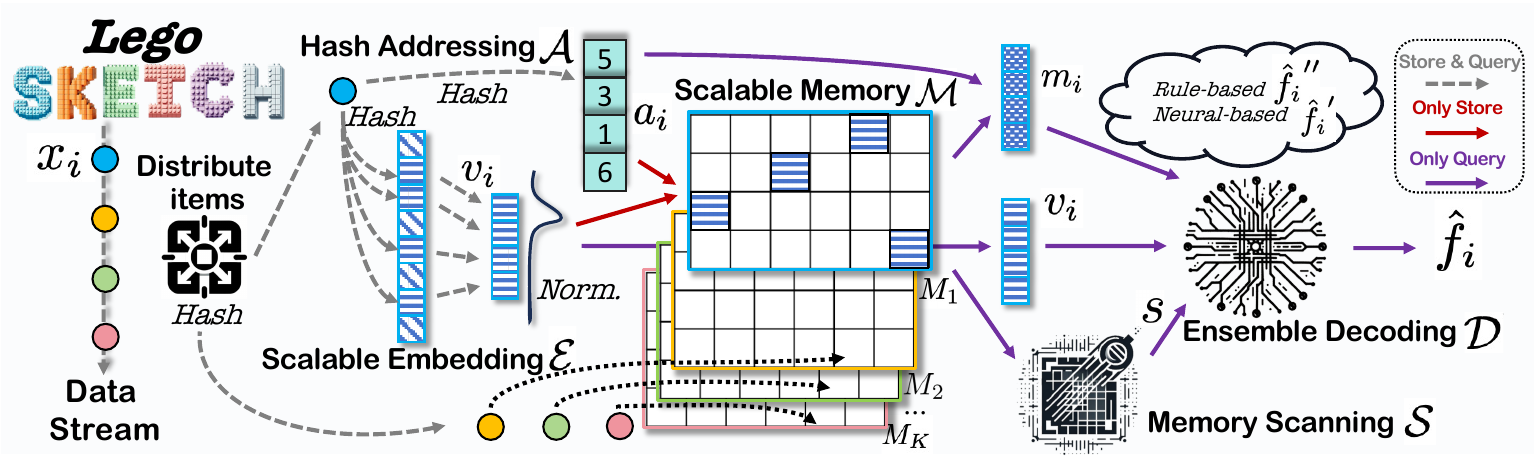}
	\vspace{-0.4cm}
	\caption{Framework of the Lego Sketch (\small When an item $x_i$ is stored, it undergoes $\mathcal{E}$ and $ \mathcal{A}$, obtaining the embedding vector $v_i$ and address vector $a_i$, which are subsequently stored into the distributed memory brick $M$ within $\mathcal{M}$. When querying an item $x_i$, its embedding vector $v_i$ and address vector $a_i$ are obtained in the same way. These, combined with stream characteristics $s$ reconstructed from the current memory brick by $\mathcal{S}$, are input into  $\mathcal{D}$ for frequency estimation $\hat{f}_i$.)}
	\label{fig:fram}
\end{figure*}
\textbf{$\textit{Scalable Embedding} (\mathcal{E})$.} The primary function of the embedding module is to obtain an embedding vector $v_i$ for a given item $x_i$ during \textit{Store} and \textit{Query} operations. Previous MANN based neural sketches,  the meta-sketch~\cite{Cao_Feng_Xie_2023,cao2024learning} and other related works~\cite{rae2019meta,vinyals2016matching,santoro2016meta,feng2024mayfly}, leverage conventional encoders like MLPs and CNNs. These encoders embed features of data items within specific domains, aim to extract domain-specific knowledge, such as identifying high- and low-frequency items in specific data stream. However, this approach presents several limitations. As highlighted in the meta-sketch~\citep{Cao_Feng_Xie_2023}, dynamic streaming scenarios can lead to robustness challenges, due to shifts between high and low-frequency items. Most importantly, these encoders face domain scalability challenges, which require encoder retraining or even model retraining, while deploying models across diverse data domains. 
For example, a model trained for web-click stream domain cannot be directly adapted to textual stream domain, letting alone the domain of other modalities.
To address these challenges, we propose to adjust the ``target'' of the embedding module as follows. Without extracting any domain-specific features, we generate embedding vectors $\{v\}$ conforming to a specific skewness\textsuperscript{\ref{ft:alpha}} range, making the embedding module domain-agnostic.
This strategy enhances both the robustness and scalability of models across varied domains.

Inspired by the Hash Embeddings~\cite{tito2017hash} in NLP fields, we introduce a novel embedding technique, termed {\it normalized multi-hash embedding}, as below:
\begin{align*}
	\scalebox{1}{$
		v_i = \mathcal{E}(x_i) = \frac{(V_{\mathcal{H}_1(x_i)}, V_{\mathcal{H}_2(x_i)}, \dots, V_{\mathcal{H}_{d_1}(x_i)})}{\lVert (V_{\mathcal{H}_1(x_i)}, V_{\mathcal{H}_2(x_i)}, \dots, V_{\mathcal{H}_{d_1}(x_i)}) \rVert_1}
		$}
\end{align*}
Specifically, it comprises a learnable vector $V$ and a set of $d_1$ independent hash functions $\{\mathcal{H}_1,...\mathcal{H}_{d_1} \}$. Each hash function maps $x_i$ to an index of $V$, retrieving corresponding values from $V$ based on $d_1$ indices, yielding $v_i \in \mathbb{R}^{d_1}$.
Finally, $v_i$ undergoes an $L_1$ normalization.
The idea lies in the utilization of hash-based random mappings, ensuring domain-agnostic scalability as the analysis in Section \ref{sec:domain_sc}. Furthermore, the $L_1$ normalization effectively maintains the stability of $L_1$ accumulation across different items in additive memory storage, thereby enhancing estimation accuracy\footnote{Related ablation studies are detailed in Section \ref{sec:abl}\label{foot:ablation}.}.

\textbf{$\textit{Hash Addressing} (\mathcal{A})$.}
To align with scalable hash embeddings, our approach simplifies the learning-based addressing in the meta-sketch by employing an additional set of $d_1$ hash functions $\{\mathcal{H}_i'\}_{i \leq d_1}$ for addressing. This method seeks out $d_1$ positions within the range of $\left[1,d_2\right]$, subsequently transforming them into a sparse address vector $a_i \in \mathbb{R}^{d_1 \times d_2}$. In the sparse address vector $a_i$, positions mapped by the hash functions are assigned a value of $1$, while all other positions retain a value of $0$.
\begin{equation*}
	\scalebox{1}{$a_i = \mathcal{A}(x_i) = \text{SparseVector}({\mathcal{H}'_1(x_i)}, \dots, {\mathcal{H}'_{d_1}(x_i)})$}
\end{equation*}
\textbf{$\textit{Scalable Memory} (\mathcal{M})$.}
With conventional MANNs, the neural structures~\cite{Cao_Feng_Xie_2023,cao2024learning,rae2019meta,feng2024mayfly} employ a single, fixed-size dense memory block \( M \in \mathbb{R}^{d_1 \times d_2} \) to store embedding vectors. Compared to the counter array used in handcrafted sketches, \( M \) offers a superior dense compression capability. However, the fixed size of \( M \) presents  challenges for the scalability in terms of varied space budgets (i.e. the size of $M$): the retraining necessity and increasing cost for training on larger budget.
Thus, we introduce the \textit{Scalable Memory}, which offers training-independent memory scalability for the Lego sketch. This module manages $K$ memory bricks, $M_1, \ldots, M_K$, using a hash function to uniformly distribute items in streams across these bricks. This approach allows for memory resizing by simply increasing $K$, thereby eliminating the need for retraining when adjusting the space budget. For example, in Section \ref{sec:acc}, we easily scale the overall memory size up to 140MB for 100 million-level streams.

Specifically, when storing an item $x_i$, it is distributed to a specific memory brick $M_{\mathcal{H}(x_i)}$, by a hash function $\mathcal{H}$. In this way, the original data stream $\mathcal{X}$, is partitioned into $K$ sub-streams $\mathcal{X}_1',...,\mathcal{X}_K'$, with each sub-stream stored within a brick $M$. Subsequently, its embedding vector $v_i \in \mathbb{R}^{d_1}$ is written into $M_{\mathcal{H}(x_i)}$, according to the address vector $a_i \in \mathbb{R}^{d_1 \times d_2}$ as follows: $M_{\mathcal{H}(x_i)} = M_{\mathcal{H}(x_i)} + v_i \circ a_i$, where $ \circ$ represents the element-wise multiplication. Similarly, for querying $x_i$, the hash function  $\mathcal{H}$ first identifies the relevant memory brick $M_{\mathcal{H}(x_i)}$, which preserves $x_i$'s frequency information. Then, the relevant information $m_i \in \mathbb{R}^{d_1}$ is extracted from $M_{\mathcal{H}(x_i)}$, using $x_i$'s address $a_i$: $m_i = M_{\mathcal{H}(x_i)}^Ta_i$\footnote{For batch matrix multiplication, vectors need dimensional broadcast and $T$ applies to last two dimensions.}. Additionally, we supplement the length of sub-stream into $m_i$, using a counting bucket.
\begin{figure}[t]
	\begin{minipage}{.45\textwidth}
		\centering
		\small
		\vspace{-0.3cm}
		\begin{algorithm}[H]
			\caption{Operations}
			\label{alg:operation}
			\begin{algorithmic}[1]
				\STATE \textbf{Operation}  \textit{Store}($x_i$):
				\STATE \quad	$v_i$ $\gets$ $\mathcal{E}(x_i)$  ; \,   $a_i \gets \mathcal{A}(x_i)$
				\STATE \quad $M_{\mathcal{H}(x_i)} = M_{\mathcal{H}(x_i)} + v_i \circ a_i$
				
				\vspace{0.5mm}
				\STATE \textbf{Operation} \textit{Query}($x_i$):
				\STATE \quad $v_i$ $\gets$ $\mathcal{E}(x_i)$  ; \,   $a_i \gets \mathcal{A}(x_i)$; $m_i = M_{\mathcal{H}(x_i)}^{T}a_i$  ;
				\STATE \quad  $s_{\mathcal{H}(x_i)},s^{(n)}_{\mathcal{H}(x_i)},s^{(\alpha)}_{\mathcal{H}(x_i)} \gets \mathcal{S}(M_{\mathcal{H}(x_i)})$
				\STATE  \quad$\hat{f_i}'\gets \mathcal{D}(m_i,v_i,s_{\mathcal{H}(x_i)},s^{(n)}_{\mathcal{H}(x_i)},s^{(\alpha)}_{\mathcal{H}(x_i)})$
			\end{algorithmic}
		\end{algorithm}
	\end{minipage}%
	\hfill
	\begin{minipage}{.45\textwidth}
		\centering
		\small
		\vspace{-0.3cm}
		\begin{algorithm}[H]
			\caption{Ensemble Decoding}
			\label{alg:dec}
			\begin{algorithmic}[1]
				\STATE \textbf{Module} \textit{$\mathcal{D}$}($m_i,v_i,s_{\mathcal{H}(x_i)},s^{(n)}_{\mathcal{H}(x_i)},s^{(\alpha)}_{\mathcal{H}(x_i)}$):
				\STATE \quad $\hat{f}_i' = g_{dec}(m_i,v_i,s_{\mathcal{H}(x_i)})$; $\hat{f}_i'' = min(\frac{m_i}{v_i})$
\STATE \quad \textbf{if} $s^{(\alpha)}_{\mathcal{H}(x_i)} \notin I_{\alpha}$ or $s^{(n)}_{\mathcal{H}(x_i)} \leq \beta$: $\hat{f}_i = \hat{f_i}''$
\STATE \quad \textbf{else}: $\hat{f}_i = \hat{f_i}'$
				\STATE \quad \textbf{return} $\hat{f_i}$
			\end{algorithmic}
		\end{algorithm}
	\end{minipage}
\vspace{-0.6cm}
\end{figure}

\textbf{\textit{Memory Scanning}$(\mathcal{S})$.}
For any given data stream $\mathcal{X}$, previous  sketches typically estimate frequency $f_i$ by data items relevant information $m_i$ retrieved from storage,  while failing to consider global characteristics $s$ of the data stream (such as distinct item number $n$ and skewness parameter $\alpha$\footnote{\label{ft:alpha}In real world streams, the frequency distributions exhibit skewed patterns, which are often approximated by the $Zipf$ distributions~\cite{JWPODS,Write_skew,Web_caching,10} using a varied parameter $\alpha$ known as skewness as detailed in Appendix \ref{apdx:zipf}.
}) as prior knowledge to aid frequency estimation.
This oversight arises primarily because handcrafted sketches struggle to infer global stream properties from their compressed storage. Although some works have attempted to infer error distributions based on handcrafted rules to assist in decoding~\cite{ting2018count,chen2021precise,liu2024disco}, such designs are limited and highly dependent on specific rules. In reality, the error of sketches is fundamentally determined by the characteristics of the stream itself, and further using rule-based methods to infer these characteristics from compressed storage is even more challenging. Moreover, designing estimation strategies that consider global characteristics complicates the matter further.
Here, we point that, these challenges can be resolved within the MANN architecture of neural sketches through end-to-end training, which could effectively reconstructs global stream characteristics.
It not only facilities the reconstruction of both explicit and implicit global stream characteristics $s$ by training scanning network $g_{scan}$ under the supervision of the loss function, but also simultaneously trains a decoding network $g_{dec}$ utilizing global characteristics $s$ for more accurate frequency estimation\textsuperscript{\ref{foot:ablation}}.

  Considering the disorderliness of information caused by hash addressing in memory bricks, these stacked pieces of information can essentially be viewed as a set that satisfies permutation invariance~\cite{kimura2024permutationinvariant}. Therefore, we employ a simple set prediction network, Deepsets~\cite{zaheer2017deep}, to implement \( g_{scan} \) for scanning and reconstructing global stream characteristics from a specific \( M_{\mathcal{H}(x_i)} \). For practical efficiency, we take a subset \( M'_{\mathcal{H}(x_i)} \) comprising one-tenth the size of the memory \( M_{\mathcal{H}(x_i)} \), as we find this fraction sufficient. This subset \( M'_{\mathcal{H}(x_i)} \) is then fed into \( g_{scan} \), producing a reconstruction vector \( s_{\mathcal{H}(x_i)} \). Among the elements of \( s_{\mathcal{H}(x_i)} \), the first two, \( s^{(n)}_{\mathcal{H}(x_i)} \) and \( s^{(\alpha)}_{\mathcal{H}(x_i)} \), regress to the explicit global characteristics \( n \) and \( \alpha \) under the supervision of an auxiliary reconstruction loss \( \mathcal{L}'' \). The remaining elements reconstruct implicit characteristics guided by the main loss \( \mathcal{L}' \) described in Section \ref{sec:training}.
\begin{equation*}
	\scalebox{1}{$s_{\mathcal{H}(x_i)}, s^{(n)}_{\mathcal{H}(x_i)}, s^{(\alpha)}_{\mathcal{H}(x_i)} = \mathcal{S}(M_{\mathcal{H}(x_i)}) = g_{scan}(M'_{\mathcal{H}(x_i)})$}
\end{equation*}
\textbf{\textit{Ensemble Decoding}$(\mathcal{D})$.}
Given a query item \( x_i \) and its corresponding \( m_i \) and \( v_i \), we concatenate \( m_i \) and \( v_i \) as the foundation, supplementing them with stream characteristics \( s_{\mathcal{H}(x_i)} \) . These vectors are then fed into a decoding network \( g_{dec} \) to obtain a neural prediction \( \hat{f}'_i \).
Benefiting from the newly designed and more interpretable MANN architecture of the Lego sketch, we enable the straightforward computation of a rule-based estimate \( \hat{f}''_i = \min \left( {m_i}/{v_i} \right) \). These two estimates are aggregated to produce a final prediction \( \hat{f}_i \) based on the estimated skewness \( s^{(\alpha)}_{\mathcal{H}(x_i)} \) and item number \( s^{(n)}_{\mathcal{H}(x_i)} \), i.e., $\hat{f}_i=\mathcal{D}(m_i,v_i,s_{\mathcal{H}(x_i)},s^{(n)}_{\mathcal{H}(x_i)},s^{(\alpha)}_{\mathcal{H}(x_i)})$, as detailed in Algorithm \ref{alg:dec}.

\subsection{Training}
\label{sec:training}

 Since the memory blocks in the memory module $\mathcal{M}$ are independent and are extended by multiple copies of a single memory block, we only need to train the Lego sketch on one memory brick $M$. Overall, we adopted the same self-supervised meta-learning training strategy as previous research ~\cite{cao2024learning,Cao_Feng_Xie_2023,rae2019meta}, which iteratively trains on a large number of synthetic meta-tasks to acquire the basic sketching capabilities. The meta-tasks are automatically generated based on data streams with different skewness under Zipf distributions, and each meta-task corresponds to storing and querying all items from a synthetic data stream into the memory block $M$. As shown in Algorithm \ref{alg:training} in the appendix, the ultimate goal of the meta-learning process is to optimize the parameters through gradient descent based on the query error of all items.

During the training, we introduce a novel self-guided weighting loss that dynamically assigns weights to different error metrics\footnote{AAE=$\frac{1}{n}\sum^{n}|\hat{f_i}-f_i|$; ARE =$\frac{1}{n}\sum^{n}|\hat{f_i}-f_i|/f_i$; MSE = $\frac{1}{n}\sum^{n}|\hat{f_i}-f_i|^2$\label{foot:errorindicator}} across different meta-tasks. 
Previous loss functions $\mathcal{L}_o$~\cite{Cao_Feng_Xie_2023,cao2024learning,rae2019meta} simply employ an average across a batch $b$ of meta-tasks $\mathcal{T}$ using only learnable weights (LW)~\cite{kendall2018multi} for different error metrics. 
Such methods neglect the variability in task difficulty, causing larger errors to overshadow smaller ones across different meta-tasks.
Our method resolves this problem by using a ``guide error'' from a sketch competitor to weight different error metrics across meta-tasks dynamically, i.e. optimizing \((error)^2/(guide \, error)^2\). Additionally, instead of using a handcrafted sketch as a competitor, the Lego sketch directly use the rule-based $\hat{f}'$  to guide the neural prediction $\hat{f}''$, which is produced by $\mathcal{D}$ simultaneously for training efficiency, leading to the \textit{self-guided weighting loss} $\mathcal{L}'$. This self-guided weighting loss \(\mathcal{L}'\) significantly enhances accuracy for large space budgets, effectively addressing the issues of advantage diminished noted in earlier neural sketches\textsuperscript{\ref{foot:ablation}}. 
\begin{align*}
			\scalebox{1}{$\mathcal{L}_{o} = \frac{1}{|b|} \sum_{\mathcal{T} \in [b]} \text{LW}\Big(AAE(\hat{f}'), ARE(\hat{f}'), MSE(\hat{f}') \Big)$} \\
			\vspace{-5pt} 
			\scalebox{1}{$\mathcal{L}' = \frac{1}{|b|} \sum_{\mathcal{T} \in [b]} \text{LW}\Big(\frac{AAE(\hat{f}')^2}{AAE(\hat{f}'')^2}, \frac{ARE(\hat{f}')^2}{ARE(\hat{f}'')^2}, \frac{MSE(\hat{f}')^2}{MSE(\hat{f}'')^2} \Big)$}
		\end{align*}
Additionally, we incorporate an auxiliary reconstruction loss $\mathcal{L}''$ in $\mathcal{S}$, constituting the final loss $\mathcal{L}$:
\begin{align*}
	\scalebox{0.95}{$\mathcal{L} = \mathcal{L}' + 0.1 \times \mathcal{L}'' \text{, where}$} \\
	\scalebox{0.95}{$\mathcal{L}'' = \frac{1}{|b|} \sum_{\mathcal{T} \in [b]} \text{LW}\Big(MSE(s^{(n)}_{\mathcal{H}(x_i)}, n), MSE(s^{(\alpha)}_{\mathcal{H}(x_i)}, \alpha)\Big)$}
\end{align*}
\subsection{Derivative with Lego Sketch as Core Structure}
\label{sec:es}

As a representative of derivatives~\citep{Cold,A,lsketch,aamand2024improved,zhao2023panakos,gao2024tailoredsketch}, the elastic sketch derivative~\citep{ES} comprises two primary components: a heavy part with filtering buckets for high-frequency items and a light part, typically a CM-sketch, for low-frequency items. This design reduces estimation errors by effectively segregating high- and low-frequency items during data storage.
We present a case study of applying the framework of elastic derivative with the Lego sketch as the core. 
Results from Sections \ref{sec:acc} and \ref{sec:robust} demonstrate its superior accuracy and robustness, demonstrating the potentials of the Lego sketch in future advancements.
\section{Analysis}

\subsection{Domain Scalability Analysis}
\label{sec:domain_sc}
  As detailed in the proof provided in Appendix~\ref{apd:sc_domian}, Theorem \ref{thm:domain} demonstrates that embedding vectors from different data domains adhere to the same distribution, confirming that the normalized multi-hash embedding technique is domain-agnostic. In Section \ref{sec:acc}, the Lego sketch deploys across datasets from five distinct domains, as shown in Table \ref{tab:data}, achieves consistently high accuracy.
\begin{theorem}
	\label{thm:domain}
	Across all data items $\{x_i\}$ within any data domain $\mathbb{X}$, the embedding vectors $\{v_i\}$ generated by the normalized multi-hash embedding technique exhibit the same distribution.
\end{theorem}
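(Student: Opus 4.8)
The plan is to reduce the claim to a statement about the distribution of the hash outputs alone, exploiting that the only place an item $x_i$ enters the embedding $\mathcal{E}(x_i)$ is through the $d_1$ indices $\mathcal{H}_1(x_i),\dots,\mathcal{H}_{d_1}(x_i)$ it receives into the learnable vector $V$. First I would fix the randomness model: treat the $d_1$ hash functions as drawn from an idealized, mutually independent uniform family, so that for any single fixed input the output is uniform over the index set $\{1,\dots,|V|\}$ of $V$. The crucial observation is that this per-input uniformity holds for every admissible key and is insensitive to what the key semantically represents — a word token, a web-click identifier, or an image descriptor all hash identically — which is exactly the domain-agnostic mechanism the theorem asserts.

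Next I would show that the distribution of the index tuple is item-independent. For any fixed $x_i$, the tuple $(\mathcal{H}_1(x_i),\dots,\mathcal{H}_{d_1}(x_i))$ is, by independence and per-input uniformity, distributed uniformly over $\{1,\dots,|V|\}^{d_1}$; this law makes no reference to $x_i$, hence is identical for every data item and every domain $\mathbb{X}$. Since $V$ is fixed after training, the pre-normalization embedding $(V_{\mathcal{H}_1(x_i)},\dots,V_{\mathcal{H}_{d_1}(x_i)})$ is a deterministic function of this tuple, so its pushforward distribution is likewise item- and domain-independent.

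I would then close the argument by noting that $L_1$ normalization is a fixed measurable map, well defined wherever the $L_1$ norm is nonzero (which holds with probability one for generic $V$). Applying a common deterministic map to identically distributed pre-normalization vectors yields identically distributed outputs, so chaining the three pushforwards establishes that every $v_i=\mathcal{E}(x_i)$ shares one common distribution across all items and all data domains.

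The main obstacle I anticipate is not the calculation but pinning down the precise hash-randomness assumption and the exact meaning of ``same distribution.'' I must decide whether the distribution is taken over the random draw of the hash family for each fixed item, or over a random item drawn from $\mathbb{X}$; the former is the cleaner route, since per-input uniformity gives an item-free law directly, whereas the latter additionally requires the hashes to spread $\mathbb{X}$ uniformly. I would also have to dispatch the measure-zero event that the selected coordinates of $V$ sum to zero, making normalization undefined, and verify that collision patterns among the $d_1$ hashes for a single item (ties $\mathcal{H}_j(x_i)=\mathcal{H}_k(x_i)$) occur with domain-independent probability, so that no hidden domain dependence enters through coincident indices.
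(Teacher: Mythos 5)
Your proposal is correct and follows essentially the same route as the paper's proof: decompose $\mathcal{E}$ into the hash-lookup stage and the $L_1$-normalization stage, observe that per-input uniformity of the hash functions makes the law of the pre-normalization vector independent of the item (and hence of the domain), and conclude by pushing that common law through the fixed normalization map. Your version is in fact somewhat more careful than the paper's --- you pin down the randomness model, the zero-norm event, and intra-item hash collisions, none of which the paper addresses (it even states the per-component probability as $1/d_1$ rather than uniform over the index set of $V$, an apparent typo) --- but the underlying argument is the same.
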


\subsection{Memory Scalability Analysis}
\label{sec:memory_sc}
Here, the impact of memory scalability on generalizability is assessed by quantifying the gap between the sub-skewness $\alpha'$ of sub-stream $\mathcal{X}'$ in each brick and the overall skewness $\alpha$ of the entire stream $\mathcal{X}$.
Assuming the frequency ranking of an item $x_i$ in  $\mathcal{X}$ is denoted by $r_i$, and its corresponding sub-ranking in $\mathcal{X}'$ is denoted by $r'_i$, we can derive Theorem \ref{thm:subskewness}, the proof of which is in Appendix \ref{apd:skewness}.
\begin{theorem}
	\label{thm:subskewness}
	Given $K$ memory bricks, the sub-skewness $\alpha'_{r'_i}$  around $r_i$ in sub-stream $\mathcal{X}'$ is:
	\begin{align*}
		\scalebox{1}{$\alpha'_{r_i',K}(D,r_i) = \alpha \log(1+\frac{D}{r_i})/\log(1+\frac{1}{r'_i}) \text{, where}$} \\
		\scalebox{0.9}{$\: D \sim \text{G}(1/K), \ (r_i - r'_i) \sim \text{NB}(r'_i, 1/K)$}
	\end{align*}
	$D$ denotes the distance of ranking $r$ between two data items, which are adjacent on the sub-ranking $r'$ in same $\mathcal{X}'$. Consequently, the expected sub-skewness is as follows:
	\begin{align*}
	\scalebox{1}{$	\textbf{E}(\alpha'_{r'_i,K}) =\sum^D \sum^{r_i} \alpha'_{r'_i,K}(D,r_i) \textbf{P}(D|K)\textbf{P}(r_i|r'_i,K)$}
	\end{align*}
\end{theorem}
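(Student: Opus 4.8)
The plan is to exploit two facts: the Zipf frequency--rank relation $f_r \propto r^{-\alpha}$, together with the observation that the distributing hash $\mathcal{H}$ assigns each distinct element \emph{entirely} to one brick, so an item's frequency is preserved while only its rank changes---from $r_i$ in $\mathcal{X}$ to $r'_i$ in the sub-stream $\mathcal{X}'$. Under the working premise that each sub-stream is again approximately Zipf with a local parameter $\alpha'$, I would \emph{define} the sub-skewness by comparing two items adjacent on the sub-ranking, namely those at sub-ranks $r'_i$ and $r'_i+1$.

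First I would write the defining relation for a Zipf sub-stream, $f_{r'_i}/f_{r'_i+1}=((r'_i+1)/r'_i)^{\alpha'}$, and solve for the local exponent as the log-ratio $\alpha'=\log(f_{r'_i}/f_{r'_i+1})/\log(1+1/r'_i)$. Then I would substitute the preserved frequencies: the item at sub-rank $r'_i$ has original rank $r_i$ and frequency $\propto r_i^{-\alpha}$, whereas the next item, at sub-rank $r'_i+1$, has original rank $r_i+D$ and frequency $\propto (r_i+D)^{-\alpha}$, where $D$ is the gap in the \emph{original} ranking between these two consecutive same-brick items. The numerator then collapses to $\alpha\log(1+D/r_i)$, yielding exactly $\alpha'_{r'_i,K}(D,r_i)=\alpha\log(1+D/r_i)/\log(1+1/r'_i)$.

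Next I would model the random routing, under which each element is independently placed into a given brick with probability $1/K$. Scanning the original ranking from one same-brick item, the gap $D$ to the next is the waiting time for a single success in independent Bernoulli trials of success probability $1/K$, hence $D \sim \text{G}(1/K)$. For the rank shift, observe that an item attains sub-rank $r'_i$ precisely when $r'_i$ of the items of rank at most $r_i$ (itself included) are routed to the brick; counting the remaining $r_i-r'_i$ higher-frequency items sent elsewhere as the failures preceding the $r'_i$-th success gives $(r_i-r'_i)\sim\text{NB}(r'_i,1/K)$. Because the deficit is governed by the routing of ranks $1,\dots,r_i$ and the gap by disjoint later trials, the two are independent; averaging $\alpha'_{r'_i,K}(D,r_i)$ against both laws then yields the stated double sum for $\textbf{E}(\alpha'_{r'_i,K})$.

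I expect the principal obstacle to be rigorously justifying the local-Zipf premise that makes the adjacent-rank definition of $\alpha'$ meaningful---real sub-streams only approximately inherit a power law---and confirming that the frequency-preservation-under-hashing step is exact for distinct elements. Secondary care is needed to reconcile the discrete rank indices with the continuous-looking logarithmic terms, and to verify that the parameter conventions for $\text{G}$ and $\text{NB}$ reproduce the expected means $K$ and $r'_i(K-1)$ that uniform routing predicts.
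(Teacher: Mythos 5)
Your proposal follows essentially the same route as the paper's proof: both define the sub-skewness as the (negative) log--log slope between two items adjacent in the sub-ranking --- your ratio relation $f_{r'_i}/f_{r'_i+1}=((r'_i+1)/r'_i)^{\alpha'}$ is algebraically identical to the paper's slope formula --- then substitute the preserved Zipf frequencies at original ranks $r_i$ and $r_i+D$, and finally model the hash routing as Bernoulli trials with success probability $1/K$ to obtain $D \sim \text{G}(1/K)$ and $(r_i-r'_i) \sim \text{NB}(r'_i,1/K)$. If anything, you supply slightly more justification than the paper (the explicit counting argument for the negative binomial law and the independence of the gap and the rank deficit), and your closing caveat about the local-Zipf premise being only approximate is a fair observation that the paper likewise treats as an approximation.
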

\vspace{-0.1cm}
\begin{figure} 
	\centering
	\includegraphics[width=8cm]{./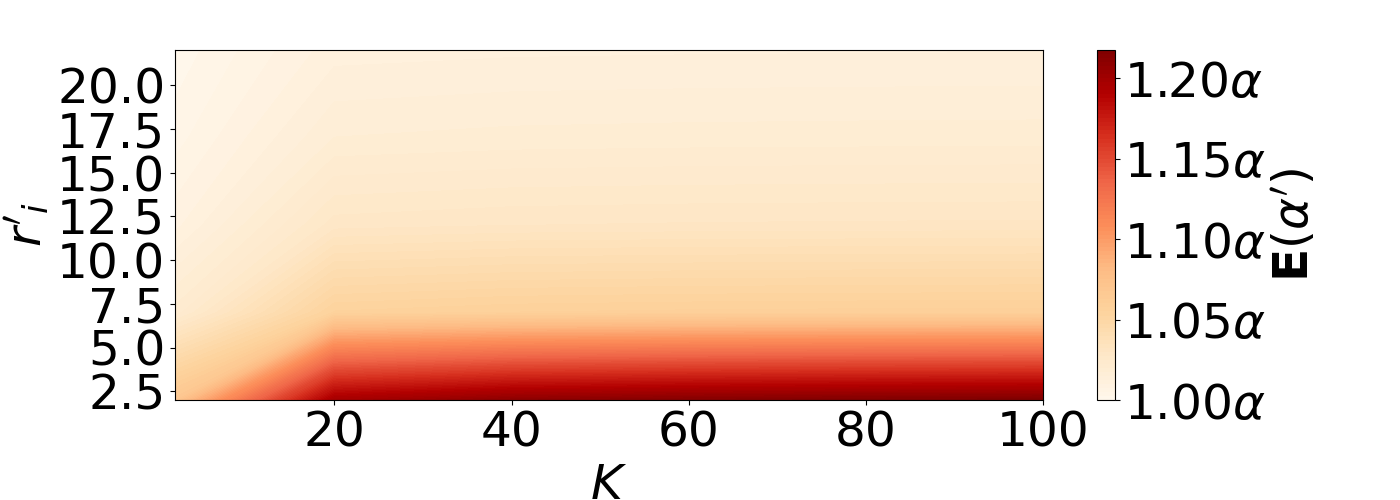}
	\vspace{-0.1cm}
	\caption{ Sub-skewness $\alpha'$}
	\label{fig:subskewness}
\end{figure}
As numerical simulation in Figure \ref{fig:subskewness}, upon excluding the top ten high-frequency items, the sub-skewness $\alpha'$ surrounding other items closely approximates the overall skewness $\alpha$. 
Given the massive volume of items in data streams, the Lego sketch thereby effectively transfers learned skewness patterns to each sub-stream, thus improving estimation accuracy. Also the experimental evidence in Section \ref{sec:acc}, including memory resizing exceeding 1400-fold: from single 100KB brick to a maximum of 140MB in Figure \ref{fig:wiki_aae} (Wiki Dataset), underscores its powerful memory scalability.
\subsection{Error Analysis}
\label{sec:error}
Here, we give the error bound for the rule-based estimation $\hat{f_i}''$ in Lego sketch as Theorem \ref{thm:errorbound}, and the detailed proof is provided in Appendix \ref{apdx:error_bound}.
\begin{theorem}
	\label{thm:errorbound}
	The error of $\hat{f_i}''$ for item $x_i$ is bounded by:	$\textbf{P}(|\hat{f_i}''-f_i| \geq  \epsilon \times N ) \leq (\epsilon \times d_2)^{-1}$
\end{theorem}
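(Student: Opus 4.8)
The plan is to mirror the classical single-row Count-Min analysis, but to route the coordinate-wise minimum through the $L_1$ normalization so that the embedding weights cancel exactly. First I would unfold the memory contents. Because each store writes $M = M + v_i \circ a_i$, after the stream of length $N$ has been processed, the cell read for query $x_i$ in coordinate $j$ equals $(m_i)_j = f_i (v_i)_j + C_j$, where $C_j = \sum_{\ell \neq i} f_\ell (v_\ell)_j \, \mathbf{1}[\mathcal{H}'_j(e_\ell) = \mathcal{H}'_j(e_i)]$ is the collision mass deposited by the other elements that the $j$-th addressing hash sends to the same column. Assuming the learnable table $V$ (hence every $v_\ell$) is non-negative, all $C_j \geq 0$, so each per-coordinate estimate $(m_i)_j/(v_i)_j = f_i + C_j/(v_i)_j$ never underestimates and $\hat f_i'' = \min_j (m_i)_j/(v_i)_j \geq f_i$; thus $|\hat f_i'' - f_i| = \min_j C_j/(v_i)_j$.

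The key step is to avoid analyzing a single coordinate directly---where the stray factor $1/(v_i)_j$ obstructs a clean bound---and instead dominate the minimum by the $(v_i)$-weighted average. Since $L_1$ normalization guarantees $\sum_j (v_i)_j = 1$, the weights $\{(v_i)_j\}_j$ form a probability vector, and the minimum of any collection lies below any convex combination of it, so
\[
|\hat f_i'' - f_i| = \min_j \frac{C_j}{(v_i)_j} \;\le\; \sum_j (v_i)_j \cdot \frac{C_j}{(v_i)_j} \;=\; \sum_j C_j .
\]
Next I would take expectation over the addressing hashes $\{\mathcal{H}'_j\}$, which are independent of the embedding table (the embedding uses the separate family $\{\mathcal{H}_j\}$). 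Treating each $\mathcal{H}'_j$ as uniform over $[1,d_2]$, every off-target element collides in a fixed coordinate with probability $1/d_2$, so $\textbf{E}[C_j] = \tfrac{1}{d_2}\sum_{\ell \neq i} f_\ell (v_\ell)_j$. Summing over $j$ and invoking $\lVert v_\ell \rVert_1 = \sum_j (v_\ell)_j = 1$ collapses the embedding dependence completely: $\textbf{E}[\sum_j C_j] = \tfrac{1}{d_2}\sum_{\ell \neq i} f_\ell = (N - f_i)/d_2 \le N/d_2$. Crucially this holds for every realization of $V$, so no averaging over the embedding table is required.

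Finally, Markov's inequality applied to the non-negative variable $\sum_j C_j$ closes the argument: $\textbf{P}(|\hat f_i'' - f_i| \ge \epsilon N) \le \textbf{P}(\sum_j C_j \ge \epsilon N) \le \textbf{E}[\sum_j C_j]/(\epsilon N) \le 1/(\epsilon d_2)$. I expect the main obstacle to be exactly the second step: the factor $1/(v_i)_j$ makes a naive per-coordinate Markov bound blow up, since $\textbf{E}[(v_\ell)_j/(v_i)_j] \ge 1$ by Jensen points the wrong way and the reciprocal can be heavy-tailed. The resolution is the observation that weighting the coordinate errors by $(v_i)_j$---legitimate because the minimum sits below any convex combination---cancels the problematic denominators and lets the $L_1$ normalization telescope the collision mass back to the stream length $N$. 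I would also explicitly flag the non-negativity of $V$ as the assumption that turns the two-sided deviation into the one-sided, Count-Min-style overestimate needed for $|\hat f_i'' - f_i| = \min_j C_j/(v_i)_j$ and for applying Markov to $\sum_j C_j \ge 0$.
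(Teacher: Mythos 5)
Your proof is correct and reaches the paper's bound, but it departs from the paper's argument at exactly the step you flagged as the main obstacle, and it handles that step more rigorously than the published proof does. The paper's proof has the same skeleton --- one-sided error coming only from hash collisions, domination of the minimum by an average, $\textbf{E}[\mathcal{I}_{i,j,k}] = d_2^{-1}$, then Markov --- but it dominates the minimum by the \emph{uniform} average $\frac{1}{d_1}\sum_{k}$ over coordinates, so the ratio terms $v_j^k/v_i^k$ survive into the expectation, and it then asserts $\frac{1}{d_1}\sum_{k}\sum_{j\neq i}\textbf{E}[\mathcal{I}_{i,j,k}]\, f_j\, \textbf{E}\bigl[v_j^k/v_i^k\bigr] \leq N/d_2$. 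That inequality implicitly requires the averaged ratio expectations to be at most $1$, which is precisely the Jensen-direction problem you identified ($\textbf{E}[v_j^k/v_i^k]\geq 1$ for independent, identically distributed components); the paper does not resolve this, and the remark following the theorem effectively concedes the looseness, attributing it to the normalization ``causing non-independent values in $v_i$.'' Your replacement of the uniform weights $1/d_1$ by the convex combination with weights $(v_i)_j$ cancels the denominators identically, so the collision mass telescopes through $\lVert v_\ell\rVert_1 = 1$ and the expectation computation needs nothing beyond uniformity of the addressing hashes and their independence from the embedding --- valid conditionally on any realization of $V$. The one hypothesis you must (and do) make explicit is non-negativity of $V$, which matches the paper's parameter constraint $V \in [\epsilon, 1]$ in the appendix on parameter settings. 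Net effect: same theorem, same $(\epsilon d_2)^{-1}$ rate, but your derivation closes a genuine gap in the paper's own middle inequality, at the mild cost of stating the non-negativity assumption up front.
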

In general, formulating an error bound for a pure neural architecture poses challenges, leading to a scarcity of error analysis in prior research. The above analysis shows a wider margin compared to handcrafted sketch boundaries.
This discrepancy primarily stems from the normalization within the embedding module, causing non-independent values in $v_i$. Future work may explore adjustments to the normalization approach or modeling the distribution of $v_i$ to further strengthen the error bound.

\section{Experiment}

\begin{table}[t!]
	\captionof{table}{Real Datasets Summary}
	\scriptsize
	\vspace{-0.2cm}
	\label{tab:data}
	\centering
	\begin{tabular}{@{\hspace{0.1cm}}l@{\hspace{0.1cm}}|lllll@{\hspace{0.05cm}}}
		\toprule
		\textbf{Name} & \textbf{Aol} & \textbf{Lkml} & \textbf{Kosarak} & \textbf{Wiki} & \textbf{Webdocs} \\ 
		\midrule
		$n$           & 197,790      & 242,976       & 41,270           & 9,379,561     & 5,267,656        \\
		$N$           & 361,115      & 1,096,439     & 8,019,015        & 24,981,163    & 299,887,139      \\
		Domain        & Word         & Comm.         & Click            & Edit          & Doc.             \\
		\bottomrule
	\end{tabular}
\end{table}

\begin{figure*}
	\centering
	\begin{minipage}{\linewidth}
		\centering
		\includegraphics[width=0.75\linewidth]{./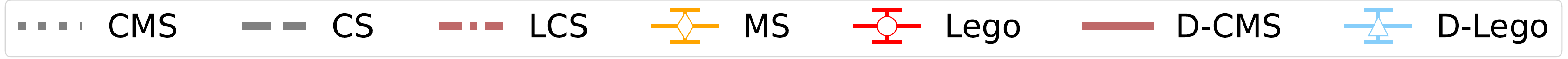}
		\vspace{-0.2cm}
	\end{minipage}
	
	\subfigure[\small Lkml (AAE)]{
		\includegraphics[width=3.9cm]{./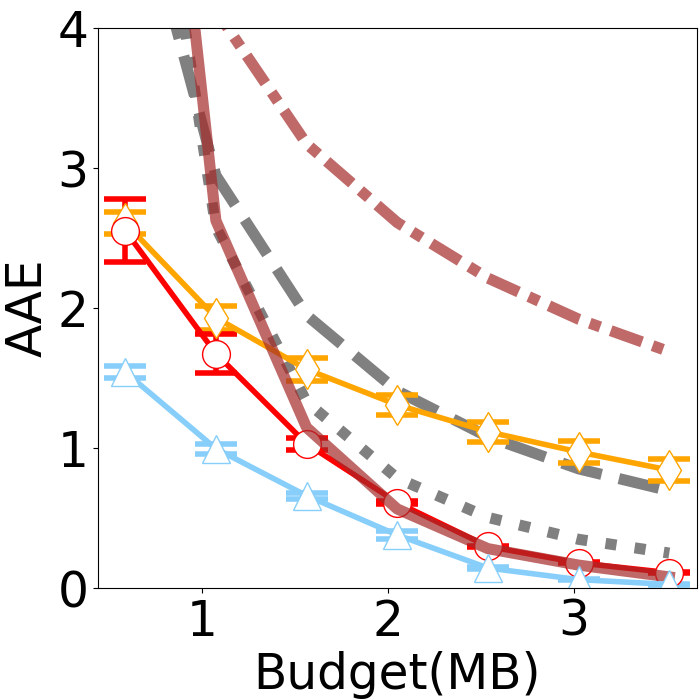}
		\label{fig:lkml_aae}
	}\hspace{-0.15cm}
	\subfigure[\small Kosarak (AAE)]{
		\includegraphics[width=3.9cm]{./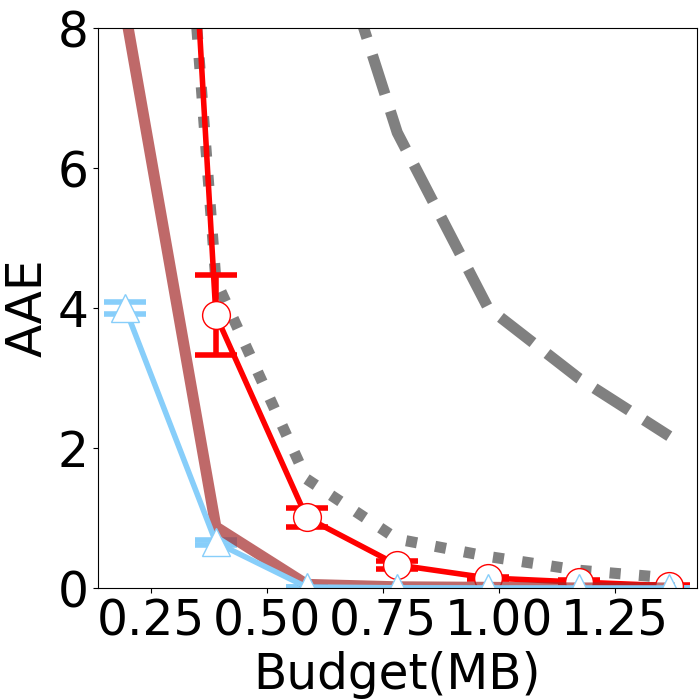}
		\label{fig:kosarak_aae}
	}\hspace{-0.15cm}
	\subfigure[\small Wiki (AAE)]{
		\includegraphics[width=3.9cm]{./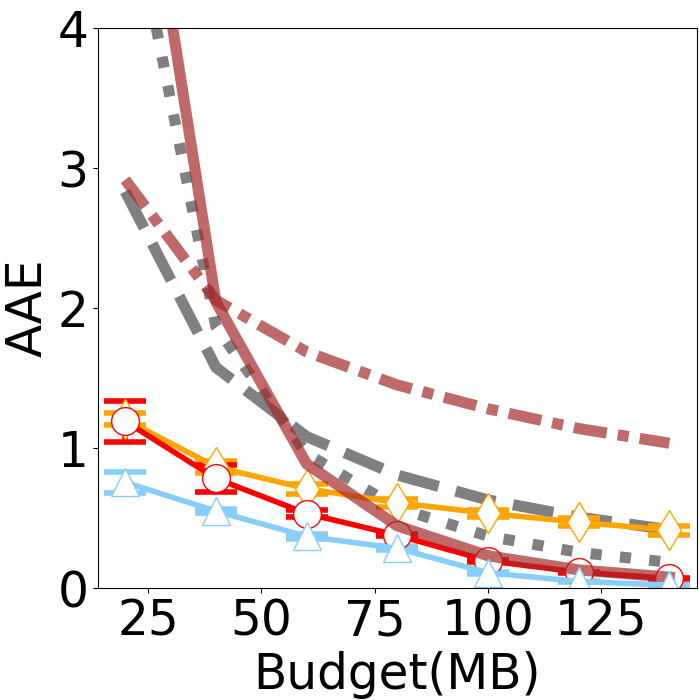}
		\label{fig:wiki_aae}
	}\hspace{-0.15cm}
	\subfigure[\small Webdocs (AAE)]{
		\includegraphics[width=3.9cm]{./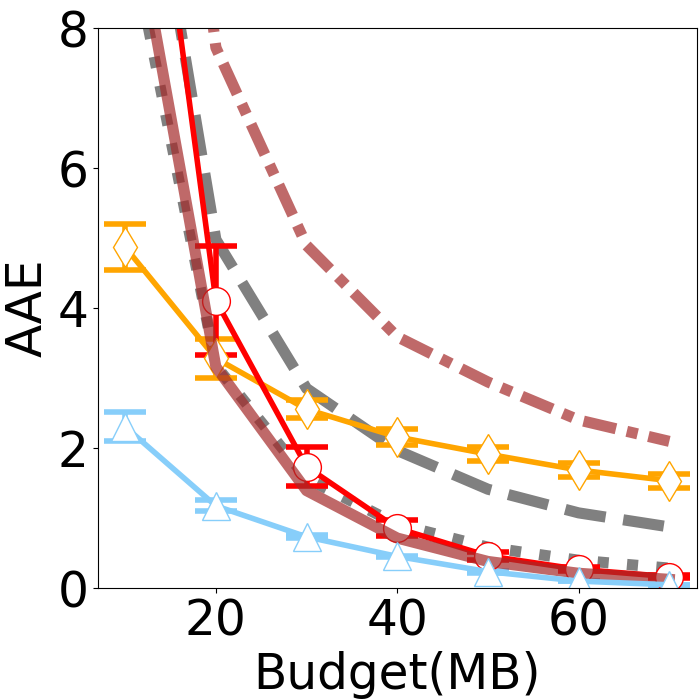}
		\label{fig:webdocs_aae}
	}
	
	\subfigure[\small Lkml (ARE)]{
		\includegraphics[width=3.9cm]{./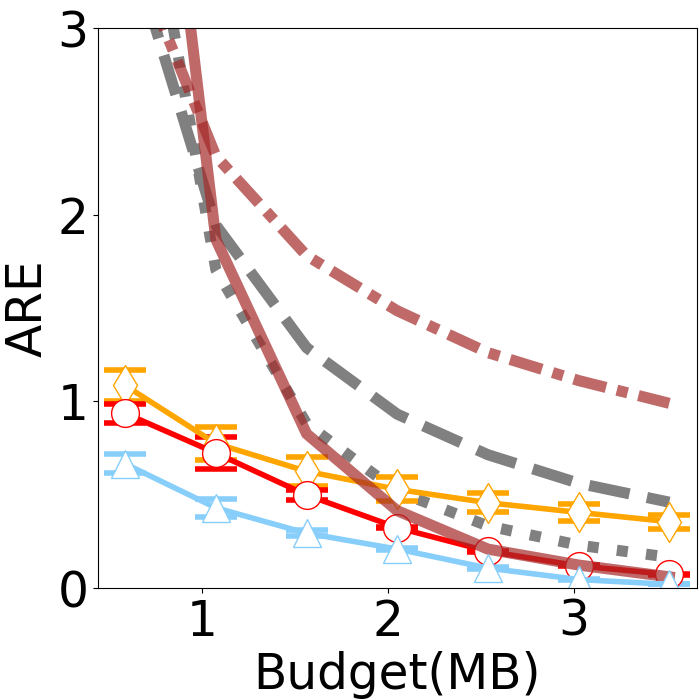}
		\label{fig:lkml_are}
	}\hspace{-0.15cm}
	\subfigure[\small Kosarak (ARE)]{
		\includegraphics[width=3.9cm]{./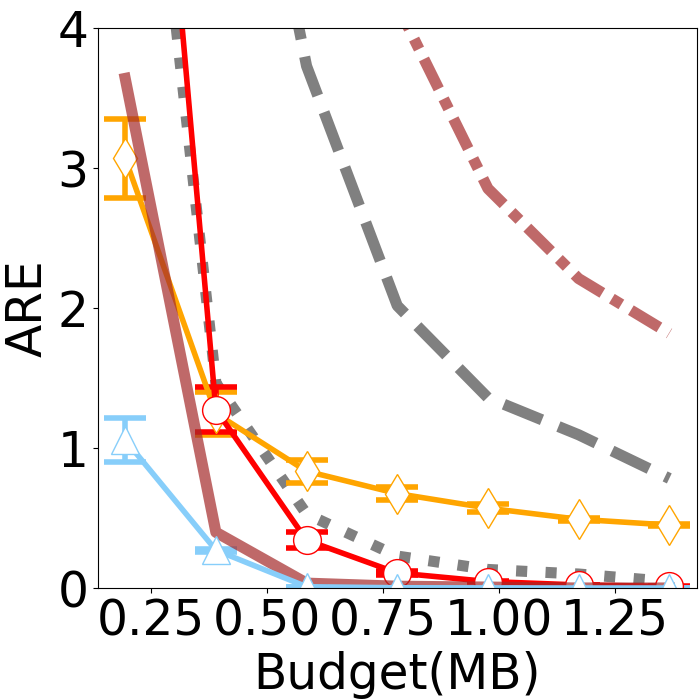}
		\label{fig:kosarak_are}
	}\hspace{-0.15cm}
	\subfigure[\small Wiki (ARE)]{
		\includegraphics[width=3.9cm]{./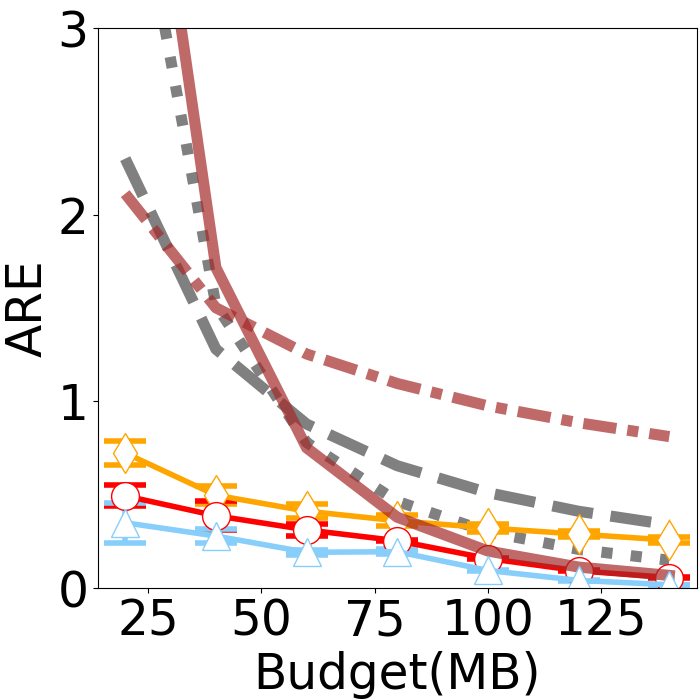}
		\label{fig:wiki_are}
	}\hspace{-0.15cm}
	\subfigure[\small Webdocs (ARE)]{
		\includegraphics[width=3.9cm]{./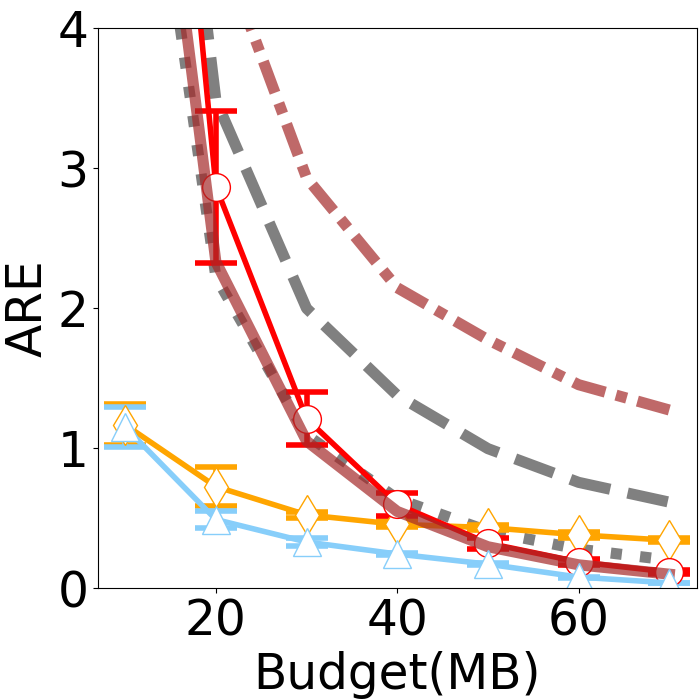}
		\label{fig:webdocs_are}
	}
	\vspace{-0.3cm}
	\caption{Errors on Four Real Datasets.}
	\label{fig:real}
\end{figure*}

\subsection{Setup}
\label{sec:setup}
\textbf{Datasets.}
Firstly, we employ five real datasets indicating irregular distributions in real applications, Aol~\cite{pass2006picture}, Lkml~\cite{homscheid2015private}, Kosarak~\cite{bodon2003fast}, Wiki~\cite{kunegis2013konect}, and Webdocs~\cite{lucchese2004webdocs}. These five datasets are widely used in sketch literatures~\cite{A,Cao_Feng_Xie_2023,lsketch,aamand2024improved} and encompass a variety of data stream domains and scales for comprehensive evaluation, as shown in Table \ref{tab:data}. \footnote{Following previous works, we also tested the Aol dataset, where Lego Sketch demonstrated a significant advantage. During the review process, the reviewers pointed out its anonymization flaw, (see \href{https://en.wikipedia.org/wiki/AOL_search_log_release}{Aol log release} ); therefore, we have excluded the results from experiments and ablation studies in our revised manuscript.}
Secondly, we generate six synthetic data streams, each comprising $100,000$ distinct data items, conforming the $Zipf$ distribution within the skewness $\alpha$ range of $\left[0.5, 1.5\right]$ to evaluate the robustness across different skewnesses comprehensively.
All datasets are evaluated using two most widely used error indicators~\cite{Cao_Feng_Xie_2023,ES,Cold,A,CMM}, specifically Average Absolute Error (AAE) and Average Relative Error (ARE): \begin{equation}
	\small
	AAE(\hat{f}) = \frac{1}{n}\sum_{i=1}^{n}|\hat{f}_i - f_i|\,;
	ARE(\hat{f}) = \frac{1}{n}\sum_{i=1}^{n}\frac{|\hat{f}_i - f_i|}{f_i} \notag
	\label{error}
\end{equation}

\textbf{Baselines.}
In our experiments, we select a broad range of baselines for comprehensive
 evaluations. Key among these are the CM-sketch (CMS)~\cite{CM} and C-sketch (CS)~\cite{C}, the most widely used sketches, foundational to subsequent derivatives ~\cite{A,ES,Cold,lsketch}.
We also included the learned augmented C-sketch(LCS)~\cite{aamand2024improved} which only utilizes prior knowledge of $n$ without orthogonal add-ons, making it a close variant to the core sketches. 
Additionally, we consider the latest neural sketch, the meta-sketch (MS)~\cite{Cao_Feng_Xie_2023,cao2024learning}, as a significant baseline, which is renowned for exceptional accuracy in small space budgets. 
Also, we incorporate scalable memory module into the meta-sketch for memory scalability, facilitating a comprehensive comparison in different budgets.
Finally, we evaluate the elastic derivative with Lego sketch, D-Lego, as detailed in Section \ref{sec:es}. Accordingly, the elastic derivative with CMS, i.e., D-CMS~\cite{ES}, is considered the competitor.

\begin{figure*}[t]
	\centering
	\begin{minipage}{\linewidth}
		\centering
		\includegraphics[width=0.75\linewidth]{./Figure/legend.pdf}
		\vspace{-0.2cm}
	\end{minipage}
	\begin{minipage}{1 \linewidth}
		\centering
		
		\subfigure[\small $\alpha=0.5$ (AAE)]{
			\includegraphics[width=3.2cm]{./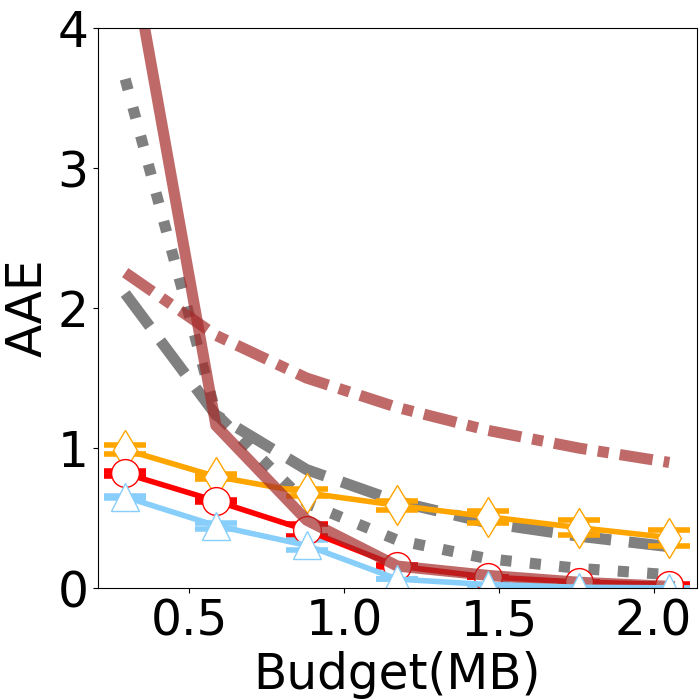}
		} \hspace{-0.15cm}
		\subfigure[\small $\alpha=0.7$ (AAE)]{
			\includegraphics[width=3.2cm]{./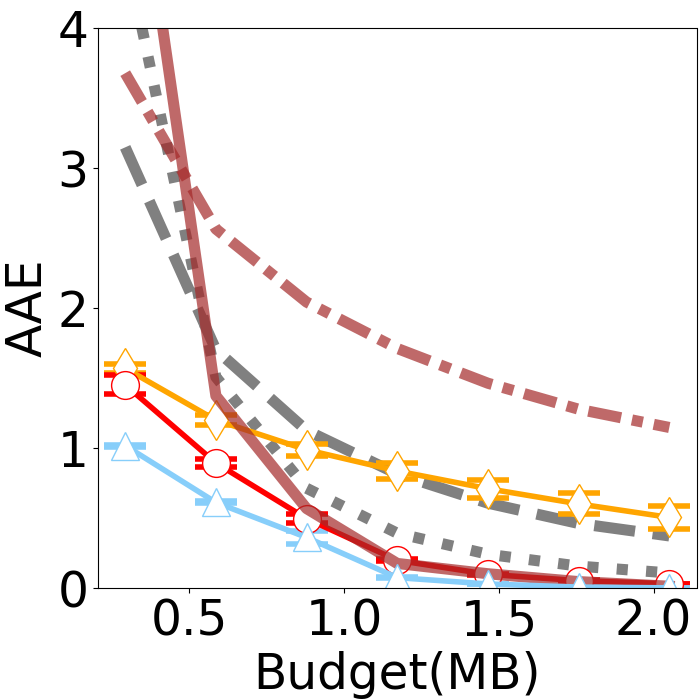}
		} \hspace{-0.15cm}
		\subfigure[\small $\alpha=0.9$ (AAE)]{
			\includegraphics[width=3.2cm]{./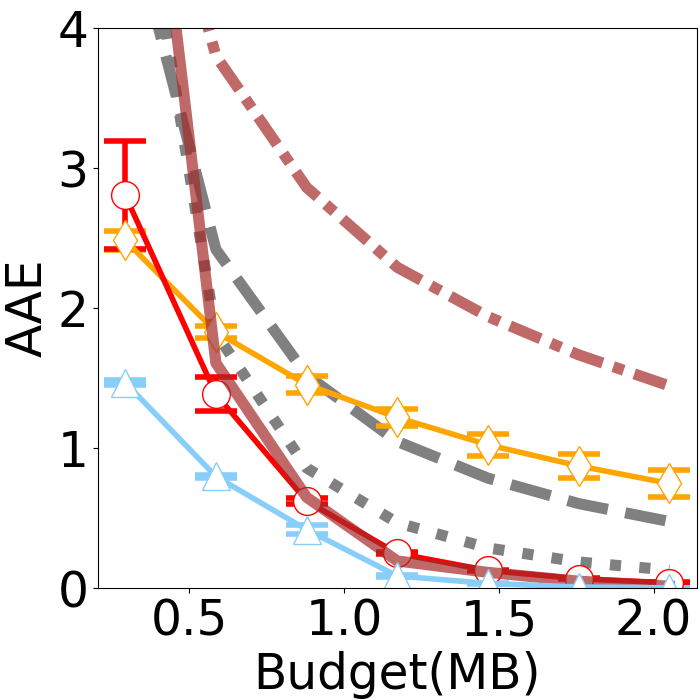}
			
		} \hspace{-0.15cm}
		\subfigure[$\small \alpha=1.3$ (AAE)]{
			\includegraphics[width=3.2cm]{./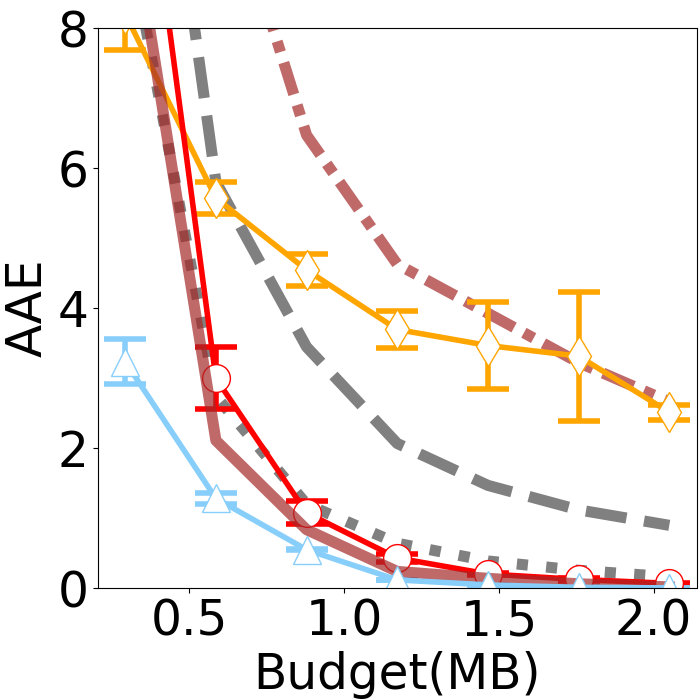}
			
		} \hspace{-0.15cm}
		\subfigure[$\small \alpha=1.5$ (AAE)]{
			\includegraphics[width=3.2cm]{./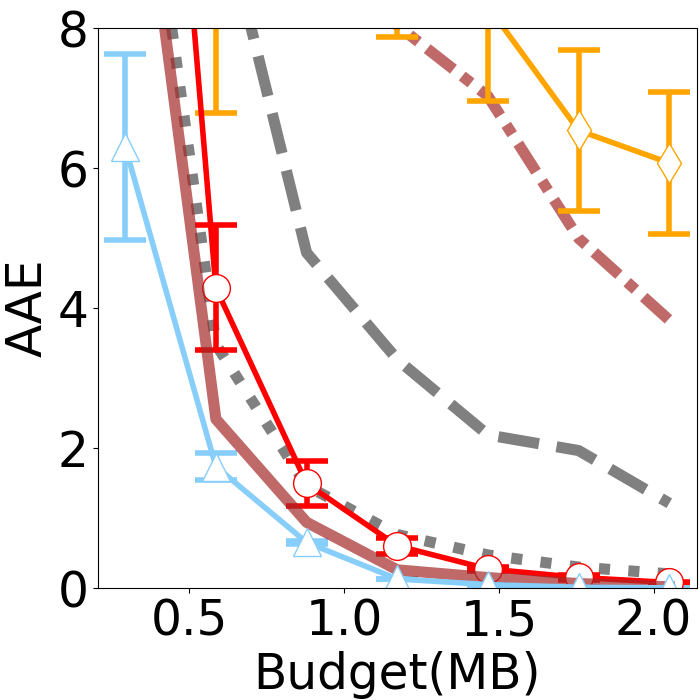}
			\label{fig:1_5_syn}
			
		} \hspace{-0.15cm}
		\subfigure[\small $\alpha=0.5$ (ARE)]{
			\includegraphics[width=3.2cm]{./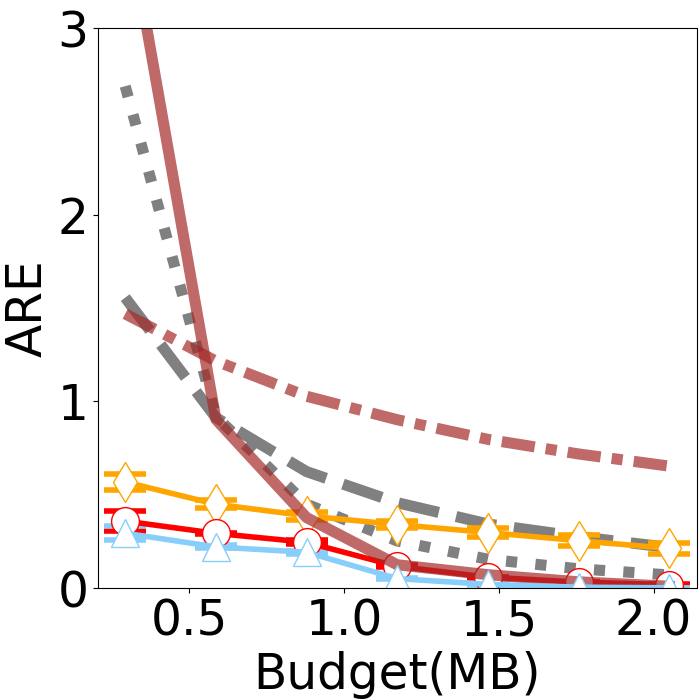}
		} \hspace{-0.15cm}
		\subfigure[\small $\alpha=0.7$ (ARE)]{
			\includegraphics[width=3.2cm]{./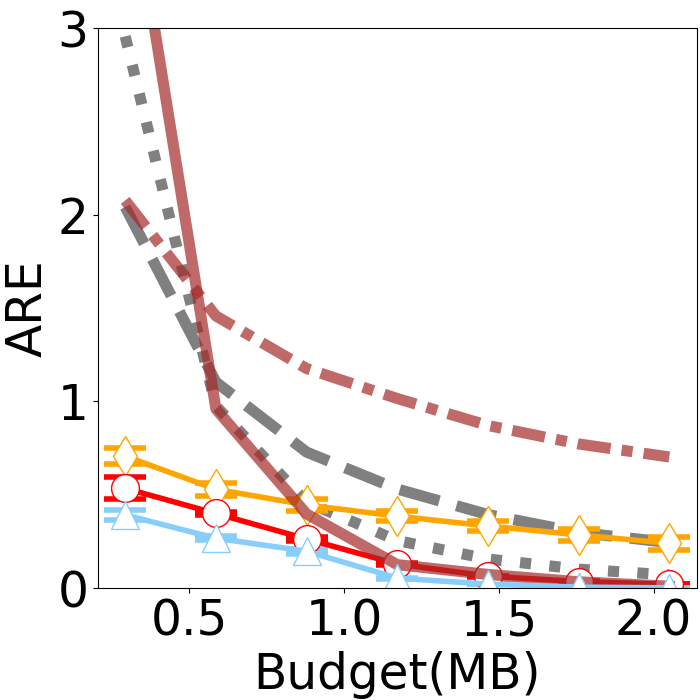}
		} \hspace{-0.15cm}
		\subfigure[\small $\alpha=0.9$ (ARE)]{
			\includegraphics[width=3.2cm]{./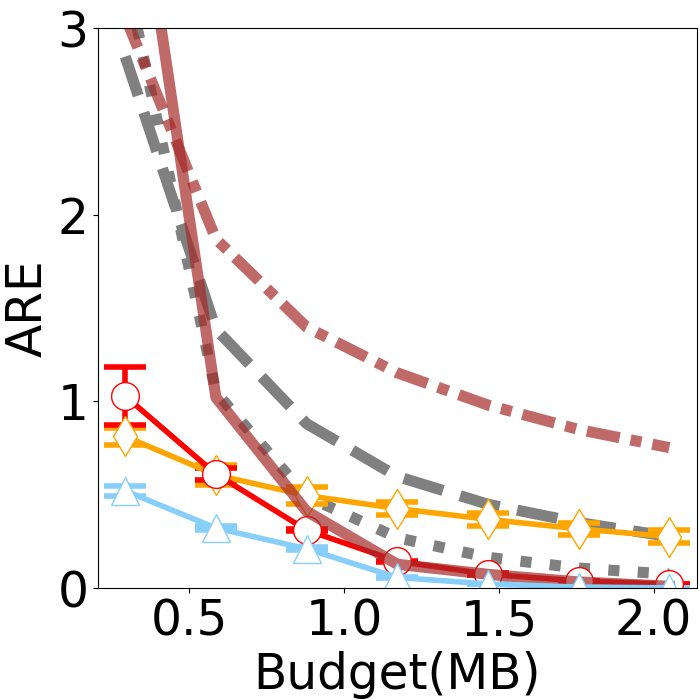}
			
		} \hspace{-0.15cm}
		\subfigure[$\small \alpha=1.3$ (ARE)]{
			\includegraphics[width=3.2cm]{./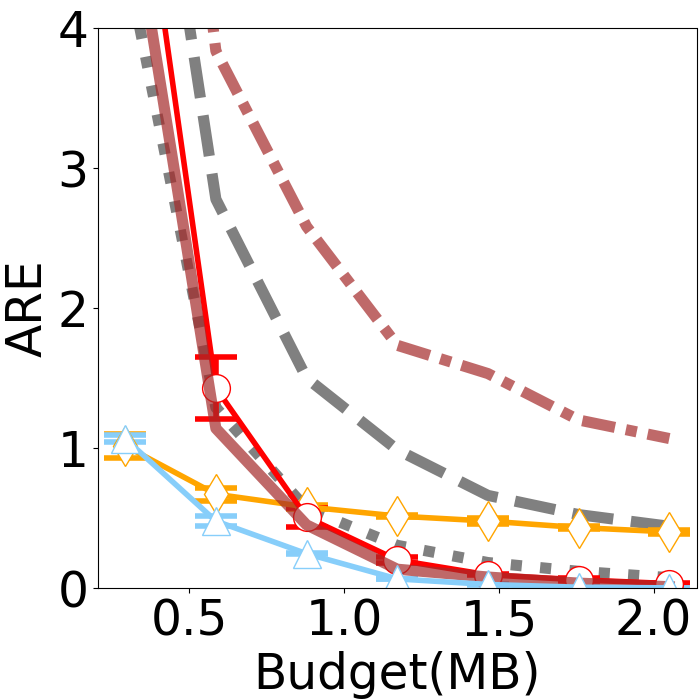}
			
		} \hspace{-0.15cm}
		\subfigure[$\small \alpha=1.5$ (ARE)]{
			\includegraphics[width=3.2cm]{./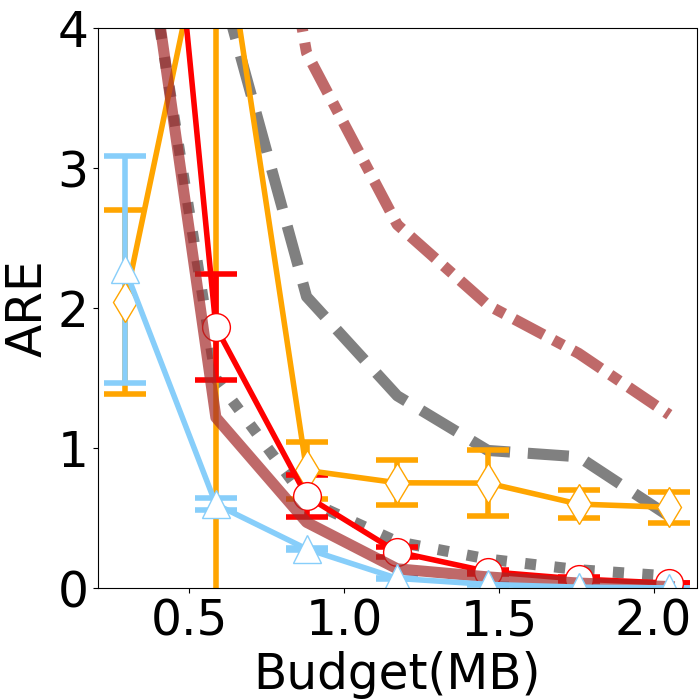}
		} \hspace{-0.15cm}
		
		\vspace{-0.3cm}
		\caption{Synthetic Datasets}
		\label{fig:syn}
	\end{minipage}
	\vspace{-0.4cm}
\end{figure*}

\textbf{Parameters.}
\label{sec:param}
The Lego sketch is pretrained with $4$ million meta-tasks within the regular skewness range of $[0.5, 1.0]$. It uses $8$-layer Deep Sets networks for the $g_{scan}$ and $g_{dec}$, featuring a maximum hidden layer size of $32$ and employing LeakyReLU for layer connectivity.
 The initial learning rate is set at $0.001$, decreasing linearly to $0.0001$. The aggregation threshold $\beta$ is consistently set at $10,000$. For all CMS and CS, we employ three hash functions, as standard practice suggests\cite{aamand2024improved, ES}.  Further details are available in the appendix \ref{apds:param} and code provided in supporting materials.

\textbf{Budget.} In line with previous works~\cite{Cao_Feng_Xie_2023,cao2024learning,rae2019meta}, the space budget $B$ is determined by the total size of $K$ memory bricks $M$, which in our experiments is 100KB per brick $M(d_1=5,d_2=5120)$.
The budget allocated for filtering buckets in D-CMS and D-Lego is set to one-fourth of $B$~\cite{ES}.

\subsection{Accuracy}
\label{sec:acc}
In this section, we conduct comprehensive evaluations using different space budgets on five real datasets in Figure \ref{fig:real}, where the Lego sketch consistently outperforms all competitors in terms of AAE and ARE, offering superior space-accuracy trade-off.
Taking the commonly used Lkml dataset in Figures \ref{fig:lkml_aae} and \ref{fig:lkml_are} as an example, we analyze the space-accuracy trade-off characteristics of the Lego sketch and its competitors. CMS and CS, recognized as the most popular handcrafted sketches, demonstrate CMS's superior accuracy in large space budgets, while CS performs better in smaller space budgets. LCS, an adaptation of CS with prior knowledge of $n$, reduces errors in smaller budgets at the expense of accuracy in larger budgets. MS, leveraging a purely neural architecture, significantly outperforms the aforementioned sketches in smaller spaces.  However, at larger budgets, such as at 3.2MB, MS falls behind handcrafted sketches.

Notably, our Lego sketch, as a novel neural sketch, further addresses the drawbacks of meta sketches in large space budgets and comprehensively reduces estimation errors.
Specifically in Figure \ref{fig:lkml_are}, with the smallest budget of 0.6 MB, the ARE of the Lego sketch is only 0.93, which represents just 85\% of the error of the neural architecture MS, and 21\% and 26\% of the errors of the traditional CMS and CS, respectively. At the largest budget of 3.6 MB, the ARE is 0.074,further reducing to just 21\%, 46\%, and 16\% of the errors of MS, CMS, and CS, respectively. 
The similar advantage is observed in terms of AAE. For example, as illustrated in Figure \ref{fig:lkml_aae}, at budgets of 0.6 MB and 3.6 MB, the AAE of the Lego sketch is only 97\% and 13\% of that of MS. These results consistently demonstrate a significant reduction in estimation errors across all budget levels, underscoring the effectiveness of our approach.

Moreover, the Lego sketch within framework of elastic derivative, abbreviated as D-Lego, is a case study in the derivative category, exhibiting a better space-accuracy trade-off. For example, at the 3.6MB budget, its AAE dramatically reduces to an tiny value of 0.03, demonstrating significantly enhanced accuracy. 
Additionally, this version notably outperforms the elastic derivative with the traditional CM-sketch, abbreviated as D-CMS, maintaining a consistent lead of approximately 5-folds to 3-folds. These results highlights the immense potential of integrating derivative strategies with the Lego sketch as core sketch in the future.

\begin{figure}
		\vspace{-0.1cm}
		\centering
		\subfigure[\small CPU]{
			\includegraphics[width=3.8cm]{./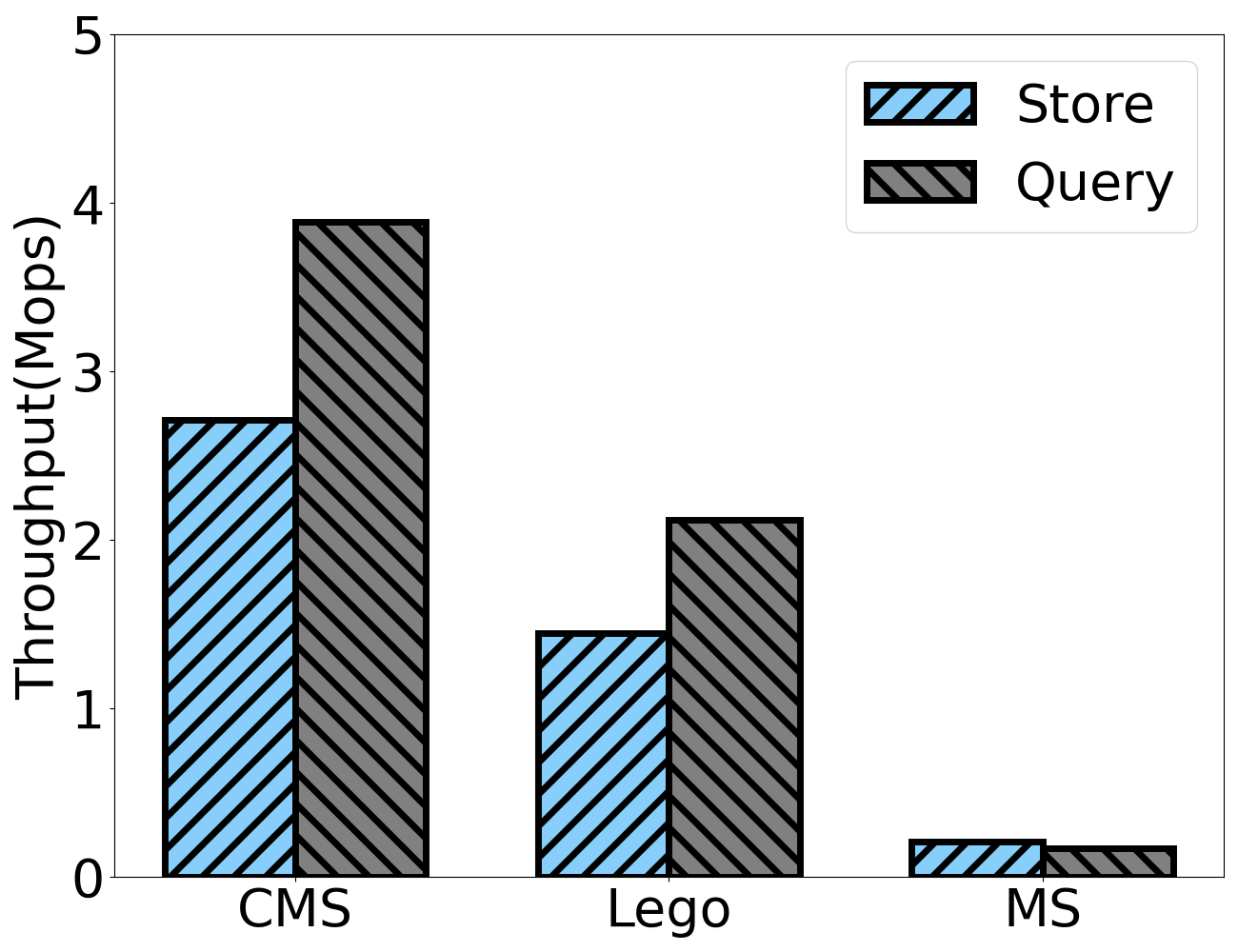}
			\label{fig:cpu}
		}
		\subfigure[\small GPU]{
			\includegraphics[width=3.8cm]{./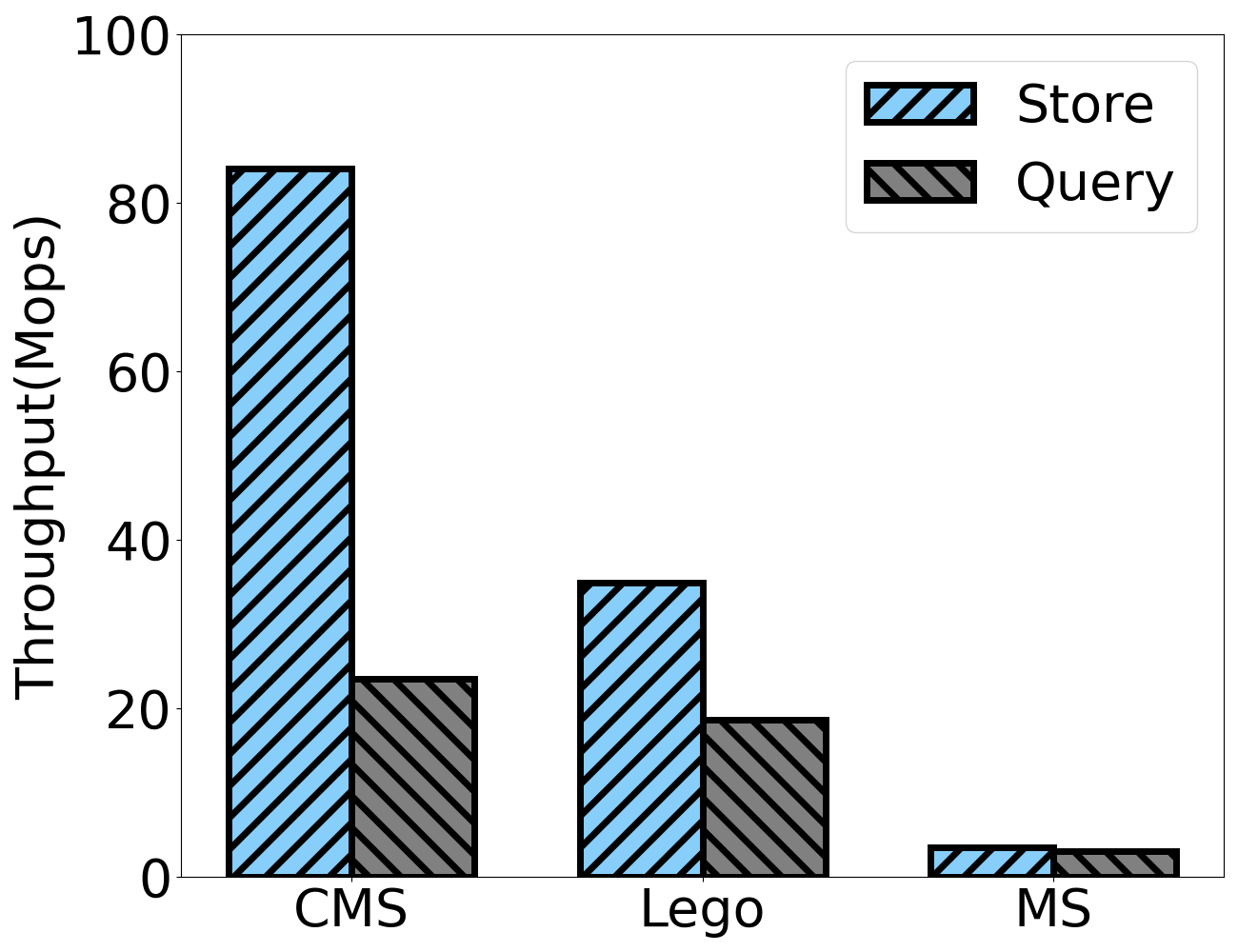}
			\label{fig:gpu}
		}
		\vspace{-0.3cm}
		\caption{Throughput}
				\vspace{-0.7cm}
		\label{fig:throughput}
\end{figure}

\subsection{Robustness under Distributional Shift}
\label{sec:robust}
To verify the robustness of the Lego sketch against potential distributional shifts across varying degrees of skewness, we also assess its accuracy under synthetic streams conforming to Zipf distributions with different skewness $\alpha$. Due to space constraints, some experimental results are shown in Appendix \ref{apdx:ARE}. As shown in Figure \ref{fig:syn}, as the skewness $\alpha$ increases from a relatively low value of 0.5 to a high value of 1.5, the estimation errors of all sketches progressively rise. The meta-sketch and LCS exhibit poorer robustness; for instance in Figure \ref{fig:1_5_syn}, at an extreme skewness of 1.5, their absolute errors predominantly exceed 8, surpassing the upper bound depicted even in large budgets. In contrast, the Lego sketch demonstrates strong estimation robustness with the skewness $\alpha$ increasing, ultimately performing comparably to the CM-sketch and surpassing all other baselines. Notably, the elastic derivative with Lego sketch (i.e. D-Lego) shows the best robustness, underscoring the immense potential of future research in derivatives based on Lego sketch.

\subsection{Throughput}

We evaluate the throughput of Lego Sketch's \textit{Store} and \textit{Query} operations on the large-scale Wiki dataset using both CPU and GPU platforms. CMS, with its simple structure and high throughput, and MS serve as baselines for comparison.
As shown in Figure \ref{fig:throughput}, the overall throughput of the Lego sketch is of the same magnitude as that of CMS, reaching millions on CPU and tens of millions with GPU acceleration. 
In contrast, compared to MS, the Lego sketch, with its more effective MANN architecture, significantly surpasses MS.
For instance, the throughput for storing is up to 6.90 times higher and for querying is 12.47 times higher, on CPU. This shows the Lego sketch's capability to deliver accurate estimation while efficiently processing streams.

\begin{figure}[t]
	\centering
	\begin{minipage}{\linewidth}
		\includegraphics[width=\linewidth]{./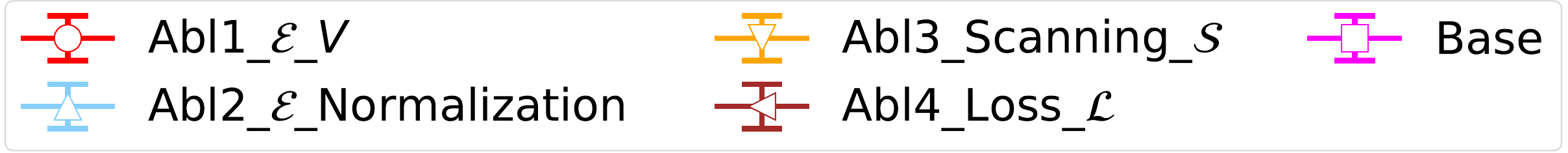}
		\vspace{-0.3cm}
	\end{minipage}
	\begin{minipage}{\linewidth}
		\subfigure[{\small 0.7 Syn Dataset (AAE)}]{
			\centering
			\vspace{-2cm}
			\includegraphics[width=3.9cm]{./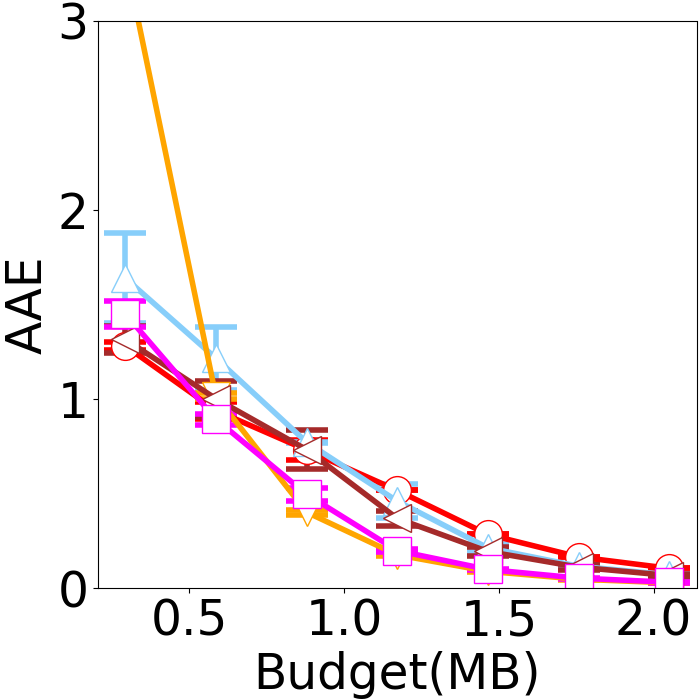}
		}
		\subfigure[0.7 Synthetic Dataset(ARE)]{
			\includegraphics[width=3.9cm]{./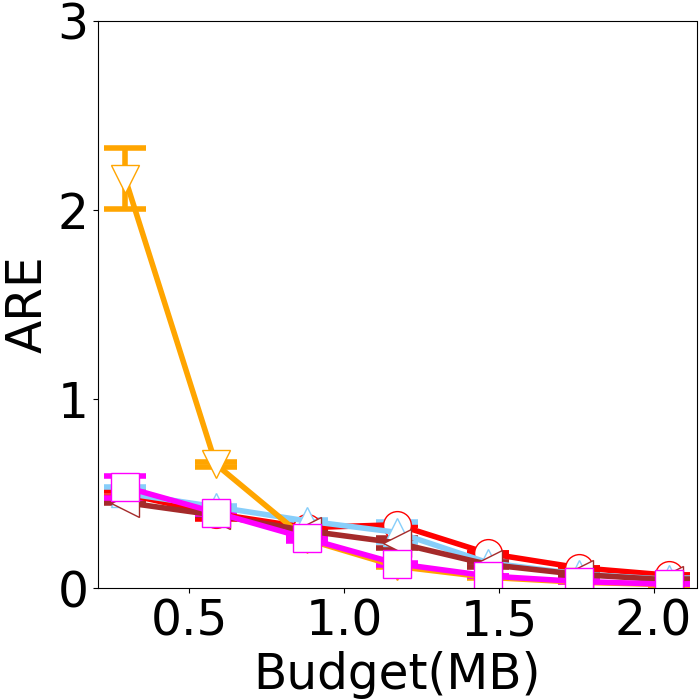}
		}\hspace{-0.3cm}
		\vspace{-0.1cm}
		\caption{Ablation Study}
		\label{fig:abl}
		\vspace{-0.5cm}
	\end{minipage}
\end{figure}

\subsection{Ablation Study}
\label{sec:abl}
Figure \ref{fig:abl} presents ablation studies conducted on two datasets, evaluating the impact of four techniques across three key modules of the Lego Sketch framework. The results are summarized as follows:

\textbf{Embedding Module  $\mathcal{E}$.} 
In the first and second ablation tests, we remove the learnable vector $\mathcal{V}$ and normalization operation in the normalized multi-hash embedding, respectively. When using a fixed $\mathcal{V}$ for embedding instead of a learnable vector, the error increases significantly under higher budgets, highlighting the necessity of end-to-end training for accuracy prediction scenarios. Moreover, removing the normalization operation consistently leads to higher errors across all budgets, underscoring the role of $L_1$ stability in maintaining stable memory increments for embedding.

\textbf{Scanning Module $\mathcal{S}$.}  
In the third ablation test, we removed the scanning module, which was newly introduced by the Lego Sketch. This change directly causes a significant rise in errors under small budgets, indicating that the novel scanning module $\mathcal{S}$ effectively captures the global characteristics of data streams through end-to-end training. Consequently, it significantly improves frequency estimation accuracy for neural sketching in low-budget scenarios.

\textbf{Self-guided Weighting Loss $\mathcal{L}$.}
The final ablation experiment replaces the self-guided weighting loss $\mathcal{L}'$ with the conventional loss $\mathcal{L}_o$ used in previous work. We observe that without the  self-guided weighting optimization, the accuracy of Lego sketch deteriorates under larger budgets, lead to higher errors in such scenarios. This demonstrates that the self-guided weighting loss effectively guides the model by dynamically reweighting different meta-tasks according to varying task difficulty during training, enabling a better space-accuracy trade-off.

\section{Conclusion}
In this work, we propose the Lego sketch, a novel neural sketch crafted to overcome the scalability and accuracy challenges encountered by existing neural sketches in real-world stream applications. Mirroring the sturdy and modular nature of Lego bricks, the Lego sketch pioneers a scalable memory-augmented neural network
capable of adapting to various data domains, space budgets, and offers favorable space-accuracy trade-off, with ease and efficacy.
Within the proposed framework of the Lego sketch, a suite of advanced techniques, including  hash embedding, scalable memory, memory scanning, and a tailored loss function, collectively ensure the estimation accuracy of the Lego sketch.
Extensive experiments demonstrate that the Lego sketch outperforms existing handcrafted and neural sketches, while its potential integration with orthogonal add-ons holds promise in facilitating data stream processing.

\newpage

\section*{Acknowledgements}
We thank the reviewers for their constructive feedback during the review process. This work was supported in part by the National Natural Science Foundation of China under Grant 62472400, Grant 62072428, Grant 62271465, and in part by the Suzhou Basic Research Program under Grant SYG202338. 

\section*{Impact Statement}
This paper presents work whose goal is to advance the field of 
Machine Learning. There are many potential societal consequences 
of our work, none which we feel must be specifically highlighted here.


\bibliography{references}
\bibliographystyle{icml2025}
\newpage
\appendix
\onecolumn
\newpage
\appendix
\onecolumn


\section{Additional Experimental Results}
\label{apdx:ARE}
Figures \ref{fig:absence_syn} are provided herein, excluded from the main text due to space constraints. 
%
%

\begin{figure*}[t]
		\begin{minipage}{\linewidth}
			\centering
			\vspace{-0.2cm}
			\includegraphics[width=1.0\linewidth]{./Figure/legend.pdf}
			\vspace{-0.3cm}
		\end{minipage}
		\centering
		\subfigure[$\small \alpha=0.7$ (AAE)]{
			\includegraphics[width=3.2cm]{./Figure/synthetic_0_7_AAE.png}
		}\hspace{-0.3cm}
		\subfigure[$\small \alpha=1.1$ (AAE)]{
			\includegraphics[width=3.2cm]{./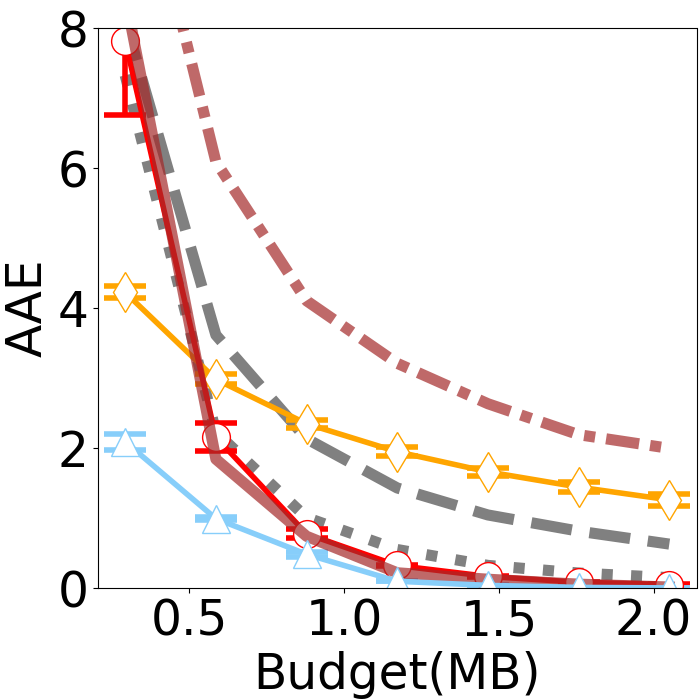}
		}
		\subfigure[$\small \alpha=0.7$ (ARE)]{
			\includegraphics[width=3.2cm]{./Figure/synthetic_0_7_ARE.png}
		}\hspace{-0.3cm}
		\subfigure[$\small \alpha=1.1$ (ARE)]{
			\includegraphics[width=3.2cm]{./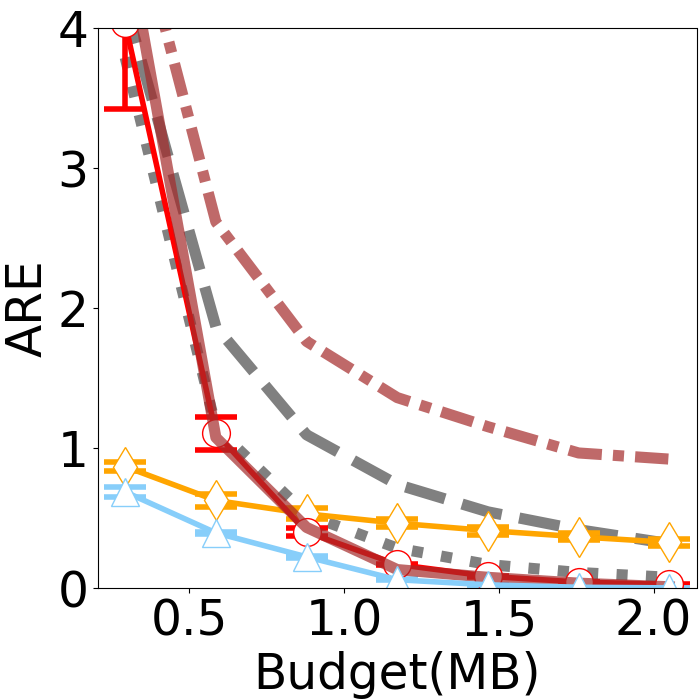}
		}
		\vspace{-0.3cm}
		\caption{Absent Results of Synthetic Datasets}
		\label{fig:absence_syn}
	
\end{figure*}

\begin{algorithm}[tb]
	\caption{Training Algorithm}
	\label{alg:training}
	\begin{algorithmic}[1]
		\small
		\STATE {\bfseries Input:} Model($K$ = 1) with all learnable parameters $\theta$
		\WHILE{ not reaching the max step}
		\STATE Generating a batch $b$ of meta-tasks
		\FOR{meta task $\mathcal{T} \in  b$}
		\STATE Clear the memory $M$
		\STATE \textit{Store} all items in $ \mathcal{T}.support \, set$
		\STATE \textit{Query} all frequencies of  distinct items in $ \mathcal{T}.query \, set$
		\ENDFOR
		\STATE Calculate $\mathcal{L}$ across all frequencies estimations of $b$
		\STATE Backward propagate through  $\partial{\mathcal{L}}$/$\partial \theta$
		\STATE Update all parameter $\theta$
		\ENDWHILE
	\end{algorithmic}
\end{algorithm}

\section{Limitations}

Lego sketch, like other existing neural sketches~\cite{cao2024learning,Cao_Feng_Xie_2023}, faces challenges in error analysis due to the limited interpretability of deep learning techniques (MANN) they employ. While we provide an initial error analysis for Lego sketch, it does not yet match the tightness of the error bounds achieved by handcrafted sketches. Future work should focus on improving error analysis techniques for neural sketches to address this gap. Furthermore, given the broad applicability of sketching techniques, extending the scalable and accurate architectural innovations of Lego sketch to related areas such as graph stream summarization~\cite{tang2016graph, feng2024mayfly,zhao2024higgs} or vector compression~\cite{ivkin2019communication,spring2019compressing,gui2023sk,zhang2023experimental} represents a promising direction for future research.

\section{Parameter settings}
\label{apds:param}
In the embedding module $\mathcal{E}$, the dimension of the learnable vector $V$ is set to 80 and all values in $V$ are constrained to $\left[\epsilon,1\right]$ to ensure numerical stability during decoding processes, where $\epsilon=0.001$. $I_n$ is set within $[1000, 50000]$ to encompass a broad spectrum of data stream scenarios. $I_{N}$, on the other hand, varies up to 10 times the minimum stream length permissible by current n and alpha, ranging from $[min length, 10 \times min length]$. All experiments run at a NVIDIA DGX workstation with CPU Xeon-8358 (2.60GHz, 32 cores), and 4 NVIDIA A100 GPUs (6912 CUDA cores and 80GB GPU memory on each GPU). The training time for a single Lego sketch is approximately 48 hours, and the relevant code is provided in the supporting materials. 

\section{Zipf Distribution}
\label{apdx:zipf}
\begin{definition}
	\label{def:zipf}
	The $Zipf(\alpha)$ distribution  describes the pattern between the frequency ratio $p_i$ and the ranking  $r_i$ (counting from 1 to n) of items  as below:
	\begin{equation*}
		p_i = \frac{C}{r_i^{\alpha}},  \:where\, C = (\sum_{i=1}^{n}{r_i^{-\alpha}})^{-1}.
	\end{equation*}
	The  $\alpha$ is a parameter indicating the skewness of $Zipf$ distribution among the data stream. Thus given the ranking $r_i$ of the item $x_i$ and the stream length $N$, its frequency can be calculated as $f_i=Np_i=\frac{NC}{r_i^{\alpha}}$.
\end{definition}
\section{Proof of Theorem~\ref{thm:domain}}
\label{apd:sc_domian}
\begin{theorem}
	Across all data items $\{x_i\}$ within any data domain $\mathbb{X}$, the embedding vectors $\{v_i\}$ generated by the normalized multi-hash embedding technique exhibit the same distribution.
\end{theorem}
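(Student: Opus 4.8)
The plan is to show that the law of the embedding vector $v_i$ depends neither on the data item $x_i$ nor on its host domain $\mathbb{X}$, by exhibiting $v_i$ as a fixed, domain-oblivious function of the hash indices alone. First I would observe that $v_i = \mathcal{E}(x_i)$ depends on $x_i$ exclusively through the tuple of indices $\mathbf{h}(x_i) = (\mathcal{H}_1(x_i), \dots, \mathcal{H}_{d_1}(x_i))$. Writing $m = \dim V$ and defining, for each index tuple $\mathbf{j} = (j_1,\dots,j_{d_1}) \in \{1,\dots,m\}^{d_1}$,
\[
\phi(\mathbf{j}) = \frac{(V_{j_1},\dots,V_{j_{d_1}})}{\sum_{k=1}^{d_1} V_{j_k}},
\]
we have $v_i = \phi(\mathbf{h}(x_i))$. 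The map $\phi$ is determined solely by the fixed learned vector $V$ and the ambient dimensions $d_1, m$; it encodes no information about which domain $x_i$ came from. Because the entries of $V$ are constrained to $[\epsilon,1]$ with $\epsilon>0$ (Appendix~\ref{apds:param}), the denominator $\sum_k V_{j_k}$ is strictly positive for every tuple, so $\phi$ is everywhere well-defined and the $L_1$ normalization introduces no degeneracy.

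Next I would invoke the standard idealized (simple uniform) hashing model used throughout sketch analysis: each $\mathcal{H}_k$ maps any input to a uniformly distributed index in $\{1,\dots,m\}$, and the $d_1$ hash functions are mutually independent. The key observation is that, under this model, the law of $\mathbf{h}(x_i)$ is the uniform product distribution on $\{1,\dots,m\}^{d_1}$ for every item $x_i$, irrespective of what $x_i$ is or which domain $\mathbb{X}$ it belongs to: a random mapping does not ``see'' domain-specific features, only the abstract identity of its input. Hence the index tuple is identically distributed across all items and all domains.

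The conclusion then follows by a pushforward argument. The law of $v_i = \phi(\mathbf{h}(x_i))$ is the image (pushforward) measure of $\mu$ under $\phi$, where $\mu$ denotes the uniform distribution on $\{1,\dots,m\}^{d_1}$. Since neither $\phi$ nor $\mu$ depends on $x_i$ or $\mathbb{X}$, the resulting distribution of $v_i$ is one and the same for every data item in every domain, which is exactly the claim. Concretely, $V$ being fixed, this is the discrete distribution placing mass proportional to $\lvert \phi^{-1}(\{w\}) \rvert$ on each attainable value $w$ in the $(d_1{-}1)$-simplex, a quantity computable from $V$ alone.

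The main obstacle here is a modeling choice to be stated cleanly rather than a hard calculation: justifying that the hash-index tuple is identically distributed across domains. This rests on treating the hashes as uniform random mappings (or as drawn from a universal family acting uniformly on the evaluated inputs), and I would make this assumption explicit at the outset. I would also flag the distinction between this exact, over-the-hash-randomness statement and the weaker empirical claim that, for a single fixed realization of the hashes, the index counts over a large balanced domain are merely approximately uniform. Everything after the modeling step is the routine observation that a deterministic, domain-oblivious map applied to a domain-oblivious distribution yields a domain-oblivious output.
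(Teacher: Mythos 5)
Your proposal is correct and follows essentially the same route as the paper's proof: both factor the embedding into a hash-index lookup (identically distributed across items and domains under the uniform-hashing assumption) followed by a deterministic $L_1$ normalization, so the pushforward distribution is item- and domain-independent. Your version is somewhat more careful than the paper's — you make the pushforward argument and the positivity of the denominator explicit, and you correctly state the index tuple as uniform over $\{1,\dots,m\}^{d_1}$ with $m=\dim V$, whereas the paper's proof writes the per-component probability as $1/d_1$, which appears to be a typo for $1/\dim V$.
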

\begin{proof}
	We can divide the normalized multi-hash embedding technique $\mathcal{E}(x_i)$ into two stages as follows. 
	\begin{equation*}
		v_{i}' = (V_{\mathcal{H}_1(x_i)}, V_{\mathcal{H}_2(x_i)}, \ldots, V_{\mathcal{H}_{d_1}(x_i)})
	\end{equation*}
	\begin{equation*}
		and
	\end{equation*}
	\begin{equation*}
				v_i = v_{i}' / \sum_{j=1}^{d_1} v'_{ij}
	\end{equation*}
	Given that the hash functions $\mathcal{H}$ uniformly distribute outputs for any given input, each component of the vector $v_i'$ associated with any input $x_i$ is independently and identically distributed (i.i.d.), with each $P[v_{ij}' = V_k] = \frac{1}{d_1}$. Consequently, for any data item, the vector $v_i$ can be derived by normalizing $v_i'$ as $v_i = v_i' / \sum_{j=1}^{d_1} v'_{ij}$, under the same probability condition $P[v'_{ij} = V_k] = \frac{1}{d_1}$. Therefore, across all data items ${x_i}$ within any data domain $\mathbb{X}$, the resulting embedding vectors ${v_i}$ adhere to a consistent distribution, as determined by the normalized multi-hash embedding technique.
\end{proof}
\section{Proof of Theorem~\ref{thm:subskewness}}
\label{apd:skewness}

\begin{theorem}
Given $K$ memory bricks, the sub-skewness $\alpha'_{r'_i}$  around $r_i$ in sub-stream $\mathcal{X}'$ is as follows:
	\begin{equation*}
		\alpha'_{r_i',K}(D,r_i) = \alpha \log(1+\frac{D}{r_i})/\log(1+\frac{1}{r'_i})
	\end{equation*}
	\begin{equation*}
		\\where \: D \sim \text{G}(1/K), (r_i-r'_i)  \sim \text{NB}(r'_i, 1/K)
	\end{equation*}
$D$ denotes the distance of ranking $r$ between two data items, which are adjacent on the sub-ranking $r'$ in same $\mathcal{X}'$.Consequently, the expected sub-skewness is as follows:
	\begin{align*}
		\textbf{E}(\alpha'_{r'_i,K}) =\sum^D \sum^{r_i} \alpha'_{r'_i,K}(D,r_i) \textbf{P}(D|K)\textbf{P}(r_i|r'_i,K)   \\
		 =\alpha  \sum_{D=1}^{\infty}\sum_{r_i=r'_i}^{\infty}  \frac{  \log(1+\frac{D}{r_i})}{\log(1+\frac{1}{r'_i})} \frac{(K-1)^{r_i+D-r'_i-1}}{K^{(D+r_i)}} \binom{r_i-1}{r'_i-1}
	\end{align*}
\end{theorem}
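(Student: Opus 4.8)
The plan is to exploit the fact that the hash map $\mathcal{H}$ assigns each distinct element to a brick independently and uniformly, so that every element keeps all of its occurrences, and hence its exact frequency, inside its brick. Under the $Zipf(\alpha)$ model of Definition~\ref{def:zipf} the global rank-$r$ element has frequency $f_r = NC/r^{\alpha}$, so two items at global ranks $r$ and $r+D$ have frequency ratio $(1+D/r)^{\alpha}$.

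First I would derive the sub-skewness formula itself. Two items adjacent in a brick's sub-ranking (sub-ranks $r'_i$ and $r'_i+1$) retain their global frequencies; call their global ranks $r_i$ and $r_i+D$. Modeling the sub-stream as a local $Zipf(\alpha')$ makes the adjacent sub-ranked frequency ratio equal to $(1+1/r'_i)^{\alpha'}$. Equating this to the frequency-preserving ratio $(1+D/r_i)^{\alpha}$ and taking logarithms gives $\alpha'_{r'_i,K}(D,r_i) = \alpha \log(1+D/r_i)/\log(1+1/r'_i)$, as claimed.

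Next I would identify the two distributions. Mark global rank $j$ as a \emph{hit} if that element lands in the chosen brick, an event of probability $1/K$ independently over $j$. The distance $D$ between two items adjacent in the sub-ranking is the gap between consecutive hits, so $D \sim \text{G}(1/K)$ with $\textbf{P}(D\mid K) = (K-1)^{D-1}/K^{D}$. An item with sub-rank $r'_i$ is the $r'_i$-th hit, and the number of misses preceding it is $r_i - r'_i$, i.e. the failures before the $r'_i$-th success; hence $(r_i-r'_i) \sim \text{NB}(r'_i, 1/K)$ with $\textbf{P}(r_i\mid r'_i,K) = \binom{r_i-1}{r'_i-1}(K-1)^{r_i-r'_i}/K^{r_i}$.

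Finally, since the hit indicators at distinct ranks are independent, $D$ and $r_i$ are conditionally independent given $r'_i$, so $\textbf{E}(\alpha'_{r'_i,K})$ factorizes as the stated double sum over the two PMFs; substituting them and combining the powers of $(K-1)$ and $K$ produces the factor $(K-1)^{r_i+D-r'_i-1}/K^{D+r_i}$ together with the binomial coefficient in the closed form. The main obstacle is not the algebra but justifying the two modeling steps---that matching a local $Zipf(\alpha')$ faithfully defines ``sub-skewness'', and that the independence of hits lets $D$ and $r_i$ enter as independent factors---both of which rest on the independent uniform hashing of distinct elements. Since the resulting double sum has no elementary closed form, I would leave it as the stated expectation and defer its quantitative behavior to the numerical evaluation in Figure~\ref{fig:subskewness}.
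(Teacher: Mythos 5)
Your proposal is correct and follows essentially the same route as the paper's proof: you derive $\alpha'_{r'_i,K}(D,r_i)$ by matching the preserved Zipf frequency ratio $(1+D/r_i)^{\alpha}$ against a local-Zipf ratio $(1+1/r'_i)^{\alpha'}$, which is algebraically identical to the paper's computation of the negative slope in log--log scale, and you identify the same Bernoulli-process model of hashing giving $D \sim \text{G}(1/K)$ and $(r_i - r'_i) \sim \text{NB}(r'_i, 1/K)$. Your additions --- writing out the two PMFs explicitly, justifying the factorization of the expectation via independence of the hit indicators, and noting that each element retains its exact frequency within its brick --- are details the paper leaves implicit, but they do not constitute a different approach.
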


\begin{proof}

	When the Zipf distribution  $p_i = \frac{C}{r_i^{\alpha}}$ is transformed into a log-log scale, the relation between $\log p_i$ and  $\log r_i$ becomes linear, i.e. $\log p_i = \log C - \alpha \log r_i$. Here, $\alpha$ manifests as the negative slope of the line, illustrating the skewness in of data stream frequencies.
	We therefore derive the $\alpha'_{r',K}$  by inferring the negative slope between frequencies $\log p_i$ and the sub-ranking $\log r'_i$. Consider a random variable $D$ that denotes the distance of rank $r$, between two data items loaded into the same sketch which are adjacent according $r'$.
	Without losing generality, we define two items $x_i$ and $x_j$ satisfying the aforementioned adjacency relationship.  Then, we can get the following correspondence:
	\begin{equation*}
		r_j = r_i + D \: and \: 	r_j' = r_i'+1
	\end{equation*}
	In that case, the negative slope  on a log-log scale can be calculated in the following way to characterize the $\alpha'_{r'_i,K}$:

	\begin{align*}
		\alpha'_{r'_i} = - \frac{\log(p_{j}) - \log(p_{i})}{\log(r_j') - \log(r_i')}= -  \frac{\log(\frac{C}{(r_i+D)^\alpha})-\log(\frac{C}{(r_i)^\alpha}) }{\log(r'_i+1) - \log(r'_i)}  =\alpha \frac{\log(1+\frac{D}{r_i})}{\log(1+\frac{1}{r'_i})}
	\end{align*}

	Given the vast number of data items in the data stream, from the perspective of a specific sub-stream, the procedure of uniformly distributing data items across $K$ matrices using a hash function can be approximated as a Bernoulli process with a probability of $\frac{1}{K}$. Thus $D$ obeys a geometric distribution with parameter $\frac{1}{K}$, i.e. $ D \sim \text{G}(\frac{1}{K})$ ,and $r_i-r'_i$ obeys a pascal distribution,i.e. $ (r_i-r'_i)  \sim \text{NB}(r'_i, \frac{1}{K})$.
\end{proof}

\section{Proof of Theorem~\ref{thm:errorbound}}
\label{apdx:error_bound}
\begin{theorem}
	The error of rule-based estimation $\hat{f_i}''$ for item $x_i$ is bounded by:	\textbf{P}$(|\hat{f_i}''-f_i| \geq  \epsilon \times N ) \leq (\epsilon \times d_2)^{-1}$
	
\end{theorem}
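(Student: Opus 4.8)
The plan is to read the rule-based estimator as a \emph{weighted} Count-Min query and to exploit the $L_1$ normalization of the embeddings in two different places: once to collapse the minimum over the $d_1$ coordinates into a convex combination, and once to make the expected collision mass telescope down to $N$. Throughout I would condition on the embeddings and take the only randomness to be that of the addressing hashes $\{\mathcal{H}'_j\}$, which is harmless because the bound will depend on the $v_k$ solely through $\sum_j v_{kj}=1$.

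First I would write the queried coordinates explicitly. Inside the relevant brick, every occurrence of item $x_k$ adds $v_{kj}$ to position $\mathcal{H}'_j(x_k)$ of row $j$, so $m_{ij} = f_i v_{ij} + \sum_{k\neq i} f_k v_{kj}\,\mathbf{1}[\mathcal{H}'_j(x_k)=\mathcal{H}'_j(x_i)]$. Since every entry of $V$, and hence of each $v_k$, is positive, dividing by $v_{ij}$ gives $m_{ij}/v_{ij} = f_i + \mathrm{err}_j$ with $\mathrm{err}_j := v_{ij}^{-1}\sum_{k\neq i} f_k v_{kj}\,\mathbf{1}[\text{collision in row }j]\ge 0$. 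Hence the estimator never underestimates, $\hat f_i'' = f_i + \min_j \mathrm{err}_j$, and $|\hat f_i''-f_i| = \min_j \mathrm{err}_j$.

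The crux is to bound this minimum by a well-chosen convex combination rather than by exploiting independence across rows. Because $v_i$ is $L_1$-normalized, the weights $\{v_{ij}\}_{j\le d_1}$ are nonnegative and sum to one, so $\min_j \mathrm{err}_j \le \sum_j v_{ij}\,\mathrm{err}_j =: W$; crucially the factor $v_{ij}^{-1}$ cancels, leaving $W = \sum_j\sum_{k\neq i} f_k v_{kj}\,\mathbf{1}[\text{collision in row }j]$. Taking expectation, using that a uniform address collides with probability $1/d_2$ and then that $\sum_{j=1}^{d_1} v_{kj}=1$ for every $k$, yields $\mathbf{E}[W] = \tfrac{1}{d_2}\sum_{k\neq i} f_k = (N-f_i)/d_2 \le N/d_2$. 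A single Markov inequality then closes the argument: $\mathbf{P}(|\hat f_i''-f_i|\ge \epsilon N) \le \mathbf{P}(W\ge \epsilon N) \le \mathbf{E}[W]/(\epsilon N) \le (\epsilon d_2)^{-1}$.

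I expect the main obstacle to be conceptual rather than computational: the coordinates $v_{ij}$ are coupled through the normalization, so the classical Count-Min device of multiplying $d_1$ independent per-row tail bounds to obtain decay exponential in $d_1$ is simply unavailable. Replacing $\min_j \mathrm{err}_j$ with the $L_1$-weighted average $W$ is exactly what sidesteps this dependence, at the price of discarding the depth factor --- which is why the resulting bound is looser than its handcrafted counterpart, as the remark following the theorem anticipates. I would also flag the two spots where positivity and normalization are indispensable (guaranteeing no underestimation, and making the weights a genuine probability vector), since the whole argument rests on them.
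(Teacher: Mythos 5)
Your proof is correct, and it shares the paper's overall skeleton: establish that the error is one-sided ($\hat f_i'' \ge f_i$, so the absolute value can be dropped), dominate the minimum over the $d_1$ coordinates by a convex combination of per-coordinate collision masses, show that the expectation of that combination is at most $N/d_2$, and conclude by Markov's inequality. The genuine difference is the choice of convex combination, and it matters. The paper averages with uniform weights $1/d_1$, so the denominators $v_i^k$ survive and its expectation step must control ratio terms $\mathbf{E}[v_j^k/v_i^k]$; it dismisses these by ``considering the normalization operation,'' which is the weakest point of the published argument --- for distinct items the embeddings are independent, so $\mathbf{E}[v_j^k/v_i^k]=\mathbf{E}[v_j^k]\,\mathbf{E}[1/v_i^k]$, and Jensen's inequality gives $\mathbf{E}[1/v_i^k]\ge 1/\mathbf{E}[v_i^k]=d_1$, meaning these ratios are in general at least $1$ rather than at most $1$, and the claimed bound $N/d_2$ does not follow without extra assumptions on the distribution of $V$. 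Your choice of weights $v_{ij}$ cancels the $v_{ij}^{-1}$ factors exactly, so after conditioning on the embeddings the expected collision mass is computed rather than estimated: $\mathbf{E}[W]=\frac{1}{d_2}\sum_{k\ne i}f_k\sum_j v_{kj}=(N-f_i)/d_2\le N/d_2$, using only the $1/d_2$ collision probability and $\sum_j v_{kj}=1$. Your version therefore proves the same bound while repairing the one step the paper glosses over, at no cost in the strength of the conclusion; and the price you correctly identify (losing the exponential-in-$d_1$ decay of classical Count-Min because the rows are coupled by normalization) is exactly the looseness the paper itself acknowledges in the remark following the theorem.
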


\begin{proof}
	Let $v_{i}^k$ be the k-th value of the embedding vector for the item $x_i \in \mathbb{R}^{d_1}$ and $\mathcal{I}_{i,j,k}$ be a Bernoulli random variable indicating if the item $x_i$ and the item $x_j$ is addressed to the same address across $d_2$ slots for the  $v_{i}^k$, then:
	\begin{equation*}
		\textbf{E}[\mathcal{I}_{i,j,k}] = d_2^{-1}
	\end{equation*}
	Recalling the storing operation shows that the error of $\hat{f_i}''$ all comes from the address conflict between different $v_i^k$, thus:
	\begin{equation*}
		\hat{f_i}'' \geq f_i \: \text{and} \: \forall k \:  (\hat{f_i}''-f_i \leq \sum_{ j = 1 , j \neq i}^{n} \frac{\mathcal{I}_{i,j,k} * f_j * v_{j}^k}{v_{i}^k})
	\end{equation*}
	\begin{equation*}
		\hat{f_i}''-f_i \leq \frac{1}{d_1}\sum^{d_1}_{k=1} \sum_{ j = 1 , j \neq i}^{n} \frac{\mathcal{I}_{i,j,k} * f_j * v_{j}^k}{v_{i}^k}
	\end{equation*}
	Considering the normalization operation acting on the embedding vectors, the expectation of the error can be expressed as:
	\begin{align*}
		\textbf{E}[\hat{f_i}''-f_i] \leq \frac{1}{d_1}\sum^{d_1}_{k=1} \sum_{ j = 0 , j \neq i}^{n} \textbf{E}[\mathcal{I}_{i,j,k}] * f_j * \textbf{E}[\frac{v_j^k}{v_i^k}] \leq \frac{N}{d_2}
	\end{align*}
	According to Markov's inequality we have:
	\begin{equation*}	
		\textbf{P}(\hat{f_i}''-f_i \geq \epsilon * N) \leq \frac{\textbf{E}[\hat{f_i}''-f_i]}{\epsilon N} \leq  (\epsilon * d_2)^{-1}
	\end{equation*}
\end{proof}



\end{document}